\newtheorem{theorem}{Theorem}
\newtheorem{lemma}[theorem]{Lemma}
\newtheorem{corollary}{Corollary}[theorem]
\newtheorem{conj}[theorem]{Conjecture}
\title{Layer Dynamics of Linearised Neural Nets}
\author{
Saurav Basu\and
Koyel Mukherjee\and
Shrihari Vasudevan\\
\affiliations
IBM Research, INDIA
\emails
\{saubasu1, kmukherj, shrivasu\}@in.ibm.com
}
\begin{document}

\maketitle

\begin{abstract}
  Despite the phenomenal success of deep learning in recent years, there remains a gap in understanding the fundamental mechanics of neural nets. More research is focussed on handcrafting complex and larger networks, and the design decisions are often ad-hoc and based on intuition. Some recent research has aimed to demystify the learning dynamics in neural nets by attempting to build a theory from first principles, such as characterising the non-linear dynamics of specialised \textit{linear} deep neural nets (such as orthogonal networks). In this work, we expand and derive properties of learning dynamics respected by general multi-layer linear neural nets. Although an over-parameterisation of a single layer linear network, linear multi-layer neural nets offer interesting insights that explain how learning dynamics proceed in small pockets of the data space. We show in particular that multiple layers in linear nets grow at approximately the same rate, and there are distinct phases of learning with markedly different layer growth. We then apply a linearisation process to a general RelU neural net and show how nonlinearity breaks down the growth symmetry observed in LNNs. Overall, our work can be viewed as an initial step in building a theory for understanding the effect of layer design on the learning dynamics from first principles.
  \end{abstract}

\section{Introduction}\label{sec:intro}

Deep learning has admittedly been a breakthrough in the recent history of AI, and vigorous research is directed towards building progressively complex networks that can train larger and more unstructured data such as video and text \cite{alexnet,imagenet-1k,resnet,lstm}. However, fundamental understanding of the mechanics of learning in deep neural nets is in many cases elusive \cite{Combes2018}. Theoretical explanation of the effect of design parameters in neural nets such as layer depth and layer dimension on convergence and learning speed are in preliminary stages.

A mature theoretical foundation for training dynamics is essential to handcrafting complex networks. Admittedly, closed form solutions for the evolution of learning in general neural nets (GNN) is notoriously difficult. Hence, an intuitive way of following the dynamics of neural nets is by developing theories for simpler surrogate networks such as LNNs and extending it to more complex GNNs \cite{saxe1,arora1}.

Multi-layer LNNs provide an over-parameterisation of a single layer linear network in the sense that the expressivity of the network is still limited (the input-output relationship can be described by a single matrix multiplication). However, due to the multiplication of weight matrix elements in the multilayer networks, the cost function (empirical squared loss, for instance) becomes a non-linear and non-convex function of the weight matrices. 

Many interesting phenomenon in GNNs such as periods of sharp learning followed by slow learning, and dependence of convergence on layer depth and dimensions, can be observed in LNNs and might possibly be due to the non-linearity introduced by multiplication of weights. Additionally, reasonable linearisation of non-linear activation functions in GNNs allow us to view all GNNs operating in a linear mode in different pockets of the data space. Therefore, much insight can be derived from the characterisation of theoretical properties of LNNs, and techniques can be developed to incorporate those insights for \textit{linearised} neural nets as a proxy for GNNs.

Earlier and some recent work on LNNs  \cite{hornik1,kawaguchi1,fukumizu1,Choromanska15a,HardtM16,PoggioL17,Soudry1} have focussed on the characterisation of the shape of the squared empirical loss surface in linear networks. Some remarkable properties of such loss surfaces are that all local minima are global minima, and \cite{kawaguchi1} show that loss surfaces of non-linear networks also can be reduced to loss surfaces of linear networks. The common observation that arbitrary initialisations all converge to good minima in deep learning can be explained with such characterisations. 

More recently, there has been a resurgence in the analysis of LNNs \cite{saxe1,advani1,arora1,Goodfellow-et-al-2016,Poggio_puzzle,Poggio_puzzle_2} from the point of view of learning dynamics. For example, \cite{saxe1} quantify the evolution of the singular values of different layer matrices in orthogonally initialised linear networks. They show that in such networks, the different modes of input-output correlation and the layer weights balance each other in a decoupled manner independent of other modes. 
%The rate of convergence becomes constant and independent of layer depth after a certain point.

\cite{arora1} show that in a general linear deep network, the effect of the over-parameterisation of multiple layers is to accelerate convergence, and the same effect cannot be obtained by simple regularisation of the cost function. Depth acts as a pre-conditioner to the update step. 

\cite{Poggio_puzzle,Poggio_puzzle_2} recently showed that the gradient descent dynamics in a non-linear network is topologically equivalent to a linear gradient system with quadratic potential with a degenerate Hessian. They extend this result to show that similar to gradient descent in linear networks, there is an early stopping criterion to avoid overfitting.

In short, many defining properties of deep learning can be observed in multilayer linear networks, and possibly arise because of the same phenomenon of coupling of weight elements in both networks. Therefore, a solid theoretical understanding of general LNNs seems like a practical and logical step to have a fundamental understanding of deep learning. In this work, we illustrate how layer growth rates are related to each other, and quantify evolution of the layer weights across time. In addition, we provide directions on how multilayer LNN theory can be used to construct a theory for GNNs. 

%Our work is a stepping stone in deconstructing the utility and significance of layer depth, layer dimension and layer initialisation in learning speed and convergence. Combination of such knowledge with the expressivity of multilayer GNNs can be a disciplined way to have a fundamental understanding of deep learning.

 The rest of the paper is organised as follows - section \ref{sec:lnn} sets the background for LNNs and describes relationships between layers during training. Section \ref{sec:gnn} describes a particular linearisation technique for GNNs and extends insights from section \ref{sec:lnn} to GNNs. Section \ref{sec:expts} provides a brief empirical study to validate some claims from sections \ref{sec:lnn} and \ref{sec:gnn}, which is followed by conclusion in section \ref{sec:concl}.

\section{Linear Neural Networks}\label{sec:lnn}

In this section, we will give a brief overview of general LNNs, and then derive various properties of such networks. 
%We will provide strong and weak guarantees to learning phenomenon respected by such networks in some special cases of loss functions. 

We define $\mathcal{X} \doteq \mathbb{R}^d$ as the feature space (say a space of images or text embeddings), and $\mathcal{Y} \doteq \mathbb{R}^k$ as the output space (for example, the weight of the $k$ possible inferences). Let the learning algorithm learn a prediction function $\hat{y} = f(x; W), x\in \mathcal{X}, y \in \mathcal{Y}, W \in \mathcal{W} \subset \mathbb{R}^D$, where $W$ is the $D$-dimensional parameterisation of the learning function. Given a point-wise loss function $l(\hat{y},y)$ that measures discrepancies between predicted and real labels, and a training set $\{x_i, y_i\}_{i=1}^m$ of features and corresponding predictions, the learning algorithm learns $f$ by estimating 

\begin{equation} 
W = \arg \underset{W \in \mathcal{W}}{\mathrm{min}} \Big[ \frac{1}{m} \sum_{i=1}^m l(f(x_i; W),y_i) \Big]
\label{eq:learning_algo}
\end{equation}

In case of a general $L+1$ layer neural network in it's simplest form, the parameterisation $W$ consists of $L$ weight matrices $\{W_1, \cdots, W_L\}$ and $L$ biases $\{b_1, \cdots, b_L\}$, and together with a non-linear function $\sigma(.)$ (where $\sigma: \mathbb{R} \rightarrow \{0,1\}$ in case of sigmoid, tanh, or $\sigma: \mathbb{R} \rightarrow \{0,\infty \}$ in case of RelU), the predictor function $f(.)$ takes the form

\begin{equation} 
f(x,W) = \sigma(W_L \sigma(W_{L-1} (\cdots \sigma(W_1 x + b_1) \cdots )+ b_{L-1}) + b_L)
\label{eq:gnn}
\end{equation}

(\ref{eq:gnn}) can be further simplified in case of a LNN, and ignoring biases, the linear predictor function can be written as 

\begin{equation} 
\hat{y} = f(x,W) = W_L W_{L-1} \cdots W_1 x
\label{eq:lnn}
\end{equation}.

In this context, a multi-layer LNN with depth $L, (L \geq 2)$ will have weights $\{W_1, \cdots, W_L\}$ with dimensions $\{n_2 \times n_1, n_3 \times n_2, \cdots, n_{L+1} \times n_{L}\}$ with $n_1 = d$ and $n_{L+1} = k$. For a general loss function $l(f(x; W),y)$, the \textit{gradient descent} or GD step for iterative minimisation of the cumulative loss $\mathcal{L} = \frac{1}{m} \sum_{i=1}^m l(f(x_i; W),y_i)$ in (\ref{eq:learning_algo}) is given as

\begin{equation} 
W_l^{t+1} \mapsfrom W_l^{t} - \eta \frac{\partial \mathcal{L} }{\partial W_l^{t}}
\label{eq:gd}
\end{equation}

where $\eta > 0$ is the \textit{learning rate}, $l$ denotes layer number and $t$ is the iteration index. A popular practice is to study iterative updates such as (\ref{eq:gd}) from the point of view of \textit{ordinary differential equations} (ODEs) by assuming $\eta \rightarrow 0$ \cite{ode1,ode2}; such an approximation allows well studied theory of ODEs to be applied to GD for gaining critical insight. Defining $\tau = \eta^{-1}$, and via standard \textit{backpropagation} (for example, see \cite{arora1}) we can show that for a LNN, the derivative of the $l$-th weight matrix in (\ref{eq:gd}) is given by -
%we can write (\ref{eq:gd}) as 

%\begin{equation} 
%\tau \frac{d W_l}{dt} =  - \frac{\partial \mathcal{L} }{\partial W_l}, \ l \in {1,\cdots, L}
%\label{eq:gd_ode}
%\end{equation}
%
%where in (\ref{eq:gd_ode}), $t$ has been identified with time. It can be shown through standard \textit{backpropagation} (for example, see \cite{arora1}) using simple chain differentiation that for a LNN, the derivative of the $l$-th weight matrix is given by -
%
%\begin{equation} 
%\frac{\partial \mathcal{L} }{\partial W_l} = \Big[\prod_{i=l+1}^{L } W_i \Big]^T \Big[\frac{1}{m} \sum_{i=1}^m \frac{\partial l(\hat{y}_i,y_i)}{ \partial \hat{y}_i} x_i^T  \Big]  \Big[\prod_{i=1}^{l -1} W_i \Big]^T.
%\label{eq:grad_lnn}
%\end{equation}
%
%\noindent Combining (\ref{eq:gd_ode}) and (\ref{eq:grad_lnn}), the ODE for weight update for an LNN is given by 

\begin{equation} 
\tau \frac{d W_l}{dt} = - \Big[\prod_{j=l+1}^{L } W_j \Big]^T \Big[\frac{1}{m} \sum_{i=1}^m \frac{\partial l(\hat{y}_i,y_i)}{ \partial \hat{y}_i} x_i^T  \Big]  \Big[\prod_{j=1}^{l -1} W_j \Big]^T.
\label{eq:gd_ode_lnn}
\end{equation}

\noindent The special form of (\ref{eq:gd_ode_lnn}) offers a very interesting symmetry in the weight updates of adjacent layers of an LNN, which we prescribe below

\begin{theorem}
For a multilayer linear neural network as described in (\ref{eq:lnn}), for all $t$ and $l \in \{1,\cdots, L\}$, we have \\
\begin{equation} 
 \frac{d }{dt} (W_{l+1}^T W_{l+1}) =  \frac{d}{dt} (W_{l} W_{l}^T)
\end{equation}
\label{thm:sym}
\end{theorem}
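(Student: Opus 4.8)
The plan is to exploit the explicit form of the layer gradient in (\ref{eq:gd_ode_lnn}) together with the product rule for differentiating a matrix product. First I would introduce shorthand for the two factors flanking the error term: write $P_l \doteq \prod_{j=l+1}^{L} W_j$ for the product of all layers above $l$ and $Q_l \doteq \prod_{j=1}^{l-1} W_j$ for the product of all layers below $l$, and abbreviate the layer-independent middle factor by $E \doteq \frac{1}{m}\sum_{i=1}^m \frac{\partial l(\hat{y}_i,y_i)}{\partial \hat{y}_i} x_i^T$. With this notation (\ref{eq:gd_ode_lnn}) becomes the compact statement $\tau \dot{W}_l = -P_l^T E Q_l^T$, valid for every layer $l$, with the crucial observation that the \emph{same} matrix $E$ appears for all layers.

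Next I would apply the product rule to each side of the claimed identity, substituting the compact ODE for each derivative. For the right-hand side,
\begin{equation}
\tau\frac{d}{dt}(W_l W_l^T) = \tau\dot{W}_l W_l^T + \tau W_l \dot{W}_l^T = -P_l^T E Q_l^T W_l^T - W_l Q_l E^T P_l,
\end{equation}
and in parallel for the left-hand side,
\begin{equation}
\tau\frac{d}{dt}(W_{l+1}^T W_{l+1}) = \tau\dot{W}_{l+1}^T W_{l+1} + \tau W_{l+1}^T \dot{W}_{l+1} = -Q_{l+1} E^T P_{l+1} W_{l+1} - W_{l+1}^T P_{l+1}^T E Q_{l+1}^T,
\end{equation}
where I have used $(\dot{W})^T = -(1/\tau) Q E^T P$ in transposing the ODE.

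The crux is then the pair of telescoping relations connecting the product factors of adjacent layers: peeling off one matrix gives $P_l = P_{l+1} W_{l+1}$ and $Q_{l+1} = W_l Q_l$, hence $P_l^T = W_{l+1}^T P_{l+1}^T$ and $Q_{l+1}^T = Q_l^T W_l^T$. Substituting these into the two displays, each expression collapses to the same pair of terms, $-W_{l+1}^T P_{l+1}^T E Q_l^T W_l^T$ and $-W_l Q_l E^T P_{l+1} W_{l+1}$, appearing only in opposite order on the two sides, which establishes the identity. The step I expect to be delicate is not conceptual but purely bookkeeping: one must track the transposes and, in particular, the reversal of multiplication order under transposition, so that the factor $W_{l+1}$ peeled off $P_l$ lands on the correct side to match the $W_l$ peeled off $Q_{l+1}$. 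The argument ultimately hinges on the structural fact that the weight matrix $W_{l+1}$ enters the gradient of layer $l$ (through $P_l$) in exactly the transposed position in which $W_l$ enters the gradient of layer $l+1$ (through $Q_{l+1}$) — the symmetry encoded by the special form of (\ref{eq:gd_ode_lnn}), and the reason the claim holds for $l \in \{1,\dots,L-1\}$.
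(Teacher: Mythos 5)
Your proof is correct and takes essentially the same route as the paper's: both rest on the telescoping relations $P_l = P_{l+1}W_{l+1}$ and $Q_{l+1} = W_l Q_l$ together with the layer-independence of the middle factor $E$, the paper merely packaging this as the intermediate identity $\frac{dW_l}{dt}W_l^T = W_{l+1}^T\frac{dW_{l+1}}{dt}$ (obtained by right-multiplying the $l$-th ODE by $W_l^T$ and left-multiplying the $(l+1)$-th by $W_{l+1}^T$) and then symmetrizing by adding the transpose, whereas you expand both sides via the product rule and match terms directly. Your closing remark that the identity genuinely holds for $l \in \{1,\dots,L-1\}$ is also a fair correction of the paper's stated index range.
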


\begin{proof}
Right multiplying both sides of the $l$-th equation of (\ref{eq:gd_ode_lnn}) by $W_l^T$ and left multiplying both sides of the $l+1$-th equation of (\ref{eq:gd_ode_lnn}) by $W_{l+1}^T$ and comparing the RHS of both equations, we get: \\
\begin{equation} 
\frac{d W_{l}}{dt} W_{l}^T = W_{l+1}^T \frac{d W_{l+1}}{dt} 
\label{eq:sym_proof_1}
\end{equation}

%Taking the transpose of (\ref{eq:sym_proof_1}) and adding to (\ref{eq:sym_proof_1}), we get the desired result.

The proof follows by taking the transpose of (\ref{eq:sym_proof_1}) and adding to (\ref{eq:sym_proof_1}).

%\begin{equation} 
%\frac{d W_{l}}{dt} W_{l}^T +  W_{l} \frac{d W_{l}^T}{dt} = W_{l+1}^T \frac{d W_{l+1}}{dt} + \frac{d W_{l+1}^T}{dt} W_{l+1} \nonumber
%\label{eq:sym_proof_2}
%\end{equation}
%\begin{equation} 
%\Rightarrow \frac{d}{dt} (W_{l} W_{l}^T) = \frac{d }{dt} (W_{l+1}^T W_{l+1}) \nonumber
%\label{eq:sym_proof_3}
%\end{equation}

\end{proof}

Theorem \ref{thm:sym} has also been noted in \cite{arora1} as a precursor to a special orthogonal initialisation. Fundamentally, theorem \ref{thm:sym} implies that for LNNs, the norm of the adjacent layers grow at the same rate. This creates a limit on the discrepancy between adjacent layers over time, and even arbitrary initialisations cannot make isolated layers to run away to divergence - this is captured in the following lemma:
 
% As implications to theorem \ref{thm:sym}, for arbitrarily initialised LNNs, we derive interesting lemmas below.
% characteristics of weight growth of multiple layers over time.

\begin{lemma} For any arbitrary initialisation of weight matrices $W_l(t_0)$ and $W_{l+1}(t_0)$ for the LNN described in (\ref{eq:lnn}) at time $t_0$ and layers $l,l+1 \in \{1,\cdots,L\}$, there exists a constant $C_l(t_0)$ only dependent on initialisation of the weight matrices (and independent of $t$) such that $\|  |W_{l+1}(t)|_F - |W_{l}(t)|_F \| \leq C_{l}(t_0) \ \forall{t}$. Here $|W_l(t)|_F$ refers to the Frobenius norm.
\label{lem:bounded_diff}
\end{lemma}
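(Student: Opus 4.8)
The plan is to turn the matrix identity of Theorem~\ref{thm:sym} into a one-dimensional statement about Frobenius norms by taking traces, and then to integrate. First I would use that $|W_l(t)|_F^2 = \mathrm{tr}(W_l^T W_l) = \mathrm{tr}(W_l W_l^T)$, so that applying the trace to both sides of the identity in Theorem~\ref{thm:sym} and exchanging $\mathrm{tr}$ with $d/dt$ gives
\begin{equation}
\frac{d}{dt}|W_{l+1}(t)|_F^2 = \frac{d}{dt}|W_l(t)|_F^2 .
\label{eq:lem_trace}
\end{equation}
The matrix symmetry thus collapses to an equality of scalar rates of change.

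Next I would integrate \eqref{eq:lem_trace} from $t_0$ to an arbitrary $t$. Since the two derivatives agree for every $t$, their difference is constant in time, so
\begin{equation}
|W_{l+1}(t)|_F^2 - |W_l(t)|_F^2 = |W_{l+1}(t_0)|_F^2 - |W_l(t_0)|_F^2 =: D_l(t_0),
\label{eq:lem_conserved}
\end{equation}
a quantity fixed by the initialisation alone. In other words, the gap between the \emph{squared} norms of adjacent layers is exactly conserved along the flow.

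The final and most delicate step is to deduce a bound on the difference of the norms themselves from the conserved difference of their squares (here the outer $\|\cdot\|$ in the statement is just the absolute value of a scalar). Writing $a = |W_{l+1}(t)|_F \ge 0$ and $b = |W_l(t)|_F \ge 0$, relation \eqref{eq:lem_conserved} reads $a^2 - b^2 = (a-b)(a+b) = D_l(t_0)$, hence $|a-b| = |D_l(t_0)|/(a+b)$ when $a+b>0$. The concern is that $a+b$ might be small and inflate the right-hand side; the conservation law prevents this, because the larger of $a^2, b^2$ must be at least $|D_l(t_0)|$, so the larger of $a, b$ is at least $\sqrt{|D_l(t_0)|}$ and therefore $a+b \ge \sqrt{|D_l(t_0)|}$. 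Substituting yields $|a-b| \le \sqrt{|D_l(t_0)|}$ (with the degenerate case $D_l(t_0)=0$ forcing $a=b$), so setting
\begin{equation}
C_l(t_0) = \sqrt{\left|\, |W_{l+1}(t_0)|_F^2 - |W_l(t_0)|_F^2 \,\right|}
\label{eq:lem_const}
\end{equation}
gives the claimed bound uniformly in $t$, with $C_l(t_0)$ depending only on the initial weights. I expect this last passage from squares to norms to be the only non-routine part, since the trace and integration steps are immediate.
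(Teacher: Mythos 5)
Your proof is correct, but it takes a genuinely different route from the paper's. The paper integrates the matrix identity of Theorem~\ref{thm:sym} to obtain the matrix-valued conservation law $W_l W_l^T = W_{l+1}^T W_{l+1} + C$, then passes to singular value decompositions, derives a bound on the difference of \emph{spectral} norms, $\big| |W_l|_2 - |W_{l+1}|_2 \big| \le |\Sigma_C|_2$, and finally invokes equivalence of norms (with an unspecified factor $\lambda$) to transfer this to Frobenius norms. You instead collapse the matrix identity to a scalar one at the outset by taking traces, which yields the exact conservation law $|W_{l+1}(t)|_F^2 - |W_l(t)|_F^2 = D_l(t_0)$, and you finish with the elementary inequality $|a-b| \le \sqrt{|a^2 - b^2|}$ for $a,b \ge 0$ (your argument via $a+b \ge \sqrt{|D|}$ is sound, and equivalent to noting $(a-b)^2 \le |a-b|(a+b)$). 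Your route buys rigor and explicitness: the constant $C_l(t_0) = \sqrt{|D_l(t_0)|}$ (which equals $\sqrt{|\mathrm{tr}(C)|}$ in the paper's notation) involves no unspecified factors, and you avoid the two delicate steps in the paper's argument --- the passage from $|W_l|_2^2 \le |W_{l+1}|_2^2 + |\Sigma_C|_2$ to a bound on $\big||W_l|_2 - |W_{l+1}|_2\big|$ without dividing by $|W_l|_2 + |W_{l+1}|_2$, and the norm-equivalence step, which does not straightforwardly convert a \emph{difference} of spectral norms into a difference of Frobenius norms. What the paper's heavier route buys in exchange is the matrix-level object $C$ and its singular values $\Sigma_C$, which are reused downstream in Corollaries~\ref{cor:ortho_init} and~\ref{cor:glorot_init} (orthogonal and Glorot initialisation); your scalar collapse discards that matrix information, so those later results would still require the paper's equation~(\ref{eq:bounded_diff_proof_1}) in its matrix form.
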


\begin{proof}
Integrating both sides of theorem (\ref{thm:sym}) with time, we get:

\begin{equation} 
{W(t)}_{l} {W(t)}_{l}^T = {W(t)}_{l+1}^T {W(t)}_{l+1} + C
\label{eq:bounded_diff_proof_1}
\end{equation}

\noindent where $C$ is the constant of integration given by 

\begin{equation} 
C = {W(t_0)}_{l+1}^T {W(t_0)}_{l+1} - {W(t_0)}_{l} {W(t_0)}_{l}^T 
\label{eq:bounded_diff_proof_2}
\end{equation}

\noindent which is only dependent on initialisation of the weight matrices. Now consider the singular value decomposition of the weight matrices in (\ref{eq:bounded_diff_proof_1}) 
%\noindent which is only dependent on initialisation of the weight matrices, and symmetric since it is a difference of two symmetric matrices. Now consider the singular value decomposition of the weight matrices in (\ref{eq:bounded_diff_proof_1}) 
%and the eigen-decomposition of $C$

\begin{align*} 
 & U_l \Sigma_l V_l^T V_l \Sigma_l^T U_l^T  = V_{l+1} \Sigma_{l+1}^T U_{l+1}^T U_{l+1} \Sigma_{l+1} V_{l+1}^T  +  U_C \Sigma_C V_C^T \\
% \Sigma_{M}^T U_{M}^T U_{M} \Sigma_{M} V_{M}^T 
%\end{align*}
%
%\begin{align*} 
&\Rightarrow \Sigma_l  \Sigma_l^T = U_l^T V_{l+1} \Sigma_{l+1}^T \Sigma_{l+1} V_{l+1}^T U_l + U_l^T U_C \Sigma_C V_C^T U_l 
\end{align*}

\noindent Taking the 2-norm on both sides, noting that $|\Sigma_l  \Sigma_l^T|_2 = |W_l|_2^2$ and $U_l^T V_{l+1}$, $U_l^T U_C$ and $V_C^T U_l$ are rotations, 
%and with $ |Q_C|_2 \doteq \sqrt{\kappa}$

\begin{align} 
 & |W_l|_2^2 \leq |W_{l+1}|_2^2 + |\Sigma_C|_2 \\
 \Rightarrow & ||W_l|_2  - |W_{l+1}|_2| \leq |\Sigma_C|_2.
\label{eq:bounded_diff_proof_5}
\end{align}

\noindent Invoking the equivalence of norms, 

\begin{equation} 
||W_l|_F  - |W_{l+1}|_F| \leq \lambda |\Sigma_C|_2
\label{eq:bounded_diff_proof_6}
\end{equation}

for some $\lambda > 0$ and setting $C_l(t_0) = \lambda |\Sigma_C|_2$, we have the required bound.

\end{proof}

%Lemma \ref{lem:bounded_diff} implies that throughout the evolution of the weight matrices, adjacent weight matrices cannot diverge too far from each other. 
\noindent As an immediate consequence, we get the corollary below:

\begin{corollary}
For any arbitrary initialisation of weight matrices $W_l(t_0)$ and $W_{k}(t_0)$ for the LNN described in (\ref{eq:lnn}) at time $t_0$ and layers $l,k \in \{1,\cdots,L\}$, there exists a constant $\kappa_{lk}(t_0)$ only dependent on initialisation of the weight matrices (and independent of $t$) such that $\|  |W_{k}(t)|_F - |W_{l}(t)|_F \| \leq \kappa_{l}(t_0) \ \forall{t}$
\end{corollary}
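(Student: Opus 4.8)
The plan is to reduce the general case to the adjacent-layer case of Lemma~\ref{lem:bounded_diff} via a telescoping decomposition followed by the triangle inequality, so that no new analysis of the dynamics is required. First I would assume without loss of generality that $l \leq k$; the case $l > k$ is identical after swapping the roles of the two indices, since the quantity being bounded is an absolute difference of scalars and is therefore symmetric in $l$ and $k$. The key observation is that the difference of Frobenius norms telescopes over the intervening consecutive layers:
\begin{equation}
|W_k(t)|_F - |W_l(t)|_F = \sum_{j=l}^{k-1}\big(|W_{j+1}(t)|_F - |W_j(t)|_F\big).
\end{equation}

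Next I would apply the triangle inequality to pass the absolute value inside the sum, and then invoke Lemma~\ref{lem:bounded_diff} termwise, since each summand is precisely the adjacent-layer discrepancy that the lemma controls:
\begin{equation}
\big||W_k(t)|_F - |W_l(t)|_F\big| \leq \sum_{j=l}^{k-1}\big||W_{j+1}(t)|_F - |W_j(t)|_F\big| \leq \sum_{j=l}^{k-1} C_j(t_0).
\end{equation}
Setting $\kappa_{lk}(t_0) = \sum_{j=l}^{k-1} C_j(t_0)$ then yields the claimed bound. Because each $C_j(t_0)$ produced by Lemma~\ref{lem:bounded_diff} depends only on the initialisations $W_j(t_0)$ and $W_{j+1}(t_0)$ and is independent of $t$, the finite sum $\kappa_{lk}(t_0)$ inherits both properties, so the inequality holds uniformly for all $t$.

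I do not expect any substantive obstacle here, as the argument is a routine chaining of the previously established adjacent-layer estimate. The one point deserving a remark is \emph{why} the per-term invocation of Lemma~\ref{lem:bounded_diff} is legitimate: its proof rests on the integrated symmetry $W_j W_j^T = W_{j+1}^T W_{j+1} + C$ from Theorem~\ref{thm:sym}, which is available precisely for layers $j$ and $j+1$ that interface in the feed-forward product \eqref{eq:lnn} and whose Gram matrices are therefore dimensionally comparable. Since the telescoping sum only ever pairs such consecutive layers, each application is valid, and the general bound between arbitrary $l$ and $k$ follows by accumulating the intermediate constants.
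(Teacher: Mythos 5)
Your proposal is correct and follows essentially the same route as the paper: the paper's own proof is a one-sentence appeal to Lemma~\ref{lem:bounded_diff}, implicitly chaining the adjacent-layer bounds across intermediate layers, which is exactly the telescoping-plus-triangle-inequality argument you spell out (with $\kappa_{lk}(t_0)=\sum_{j=l}^{k-1}C_j(t_0)$). Your version is simply a more explicit writeup of the same idea, with no substantive difference in approach.
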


\begin{proof}
Since difference of norms of adjacent weight matrices are bounded by a constant $C_l(t_0)$ as shown in lemma \ref{lem:bounded_diff}, difference of norm of any two layers is also bounded.
\end{proof}

More interestingly, it can be observed from (\ref{eq:bounded_diff_proof_5}) that the initial constant $C_l(t_0)$ is dependent on the magnitude of the initial difference $|\Sigma_C|_2$. Depending on the initialisation of the weight matrices, $|\Sigma_C|_2$ can be very close to zero or exactly zero. Consequently, the norm of the weight matrices can be kept arbitrarily close to each other during training.

\begin{corollary}
There exists an initialisation called the orthogonal initialisation of adjacent weight matrices such that $|\Sigma_C|_F = 0$ in (\ref{eq:bounded_diff_proof_5}). Consequently, $|W_l(t)|_F = |W_k(t)|_F,  l,k \in \{1,\cdots,L\}, \forall(t)$.
\label{cor:ortho_init}
\end{corollary}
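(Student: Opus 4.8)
The plan is to reduce the claim to the single observation that the integration constant $C$ of (\ref{eq:bounded_diff_proof_2}) can be forced to vanish by a judicious choice of initial weights, and then to read the norm equality straight off Lemma \ref{lem:bounded_diff}. First I would note that $\Sigma_C$ is precisely the diagonal matrix of singular values of $C$, so $|\Sigma_C|_F = 0$ holds if and only if every singular value of $C$ is zero, i.e.\ if and only if $C = 0$ as a matrix. By (\ref{eq:bounded_diff_proof_2}), this is equivalent to imposing at the initial time $t_0$ the condition
\begin{equation}
W_{l+1}^T(t_0)\, W_{l+1}(t_0) = W_l(t_0)\, W_l^T(t_0)
\end{equation}
for every adjacent pair $l, l+1$; since both sides are $n_{l+1}\times n_{l+1}$ matrices, this is a well-posed system, one equation per interior layer.

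Next I would exhibit an explicit initialisation that satisfies all of these conditions at once. The natural candidate is the orthogonal initialisation referenced in \cite{saxe1,arora1}: choose each $W_l(t_0)$ to be a (semi-)orthogonal matrix so that the relevant Gram matrices coincide. In the cleanest case of equal layer widths one takes each $W_l(t_0)$ orthogonal, whence $W_l W_l^T = W_l^T W_l = I$ and therefore $W_{l+1}^T W_{l+1} = I = W_l W_l^T$ at $t_0$, giving $C = 0$ for every $l$. With $C = 0$ we have $|\Sigma_C|_2 = 0$, so the constant appearing in (\ref{eq:bounded_diff_proof_5}) is $C_l(t_0) = \lambda |\Sigma_C|_2 = 0$.

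Finally, I would substitute $C_l(t_0)=0$ into the bound of Lemma \ref{lem:bounded_diff} to obtain $\big|\,|W_l(t)|_F - |W_{l+1}(t)|_F\,\big| \leq 0$, which forces $|W_l(t)|_F = |W_{l+1}(t)|_F$ for all $t$ and every adjacent pair. Chaining this equality along successive layers, exactly as in the preceding corollary, extends it to arbitrary $l,k \in \{1,\dots,L\}$, yielding the asserted conclusion.

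I expect the main obstacle to lie not in the norm bookkeeping but in the simultaneous satisfiability of the chain $W_{l+1}^T W_{l+1} = W_l W_l^T$ when the widths $n_l$ genuinely vary. Since $W_l W_l^T$ has rank at most $\min(n_{l+1}, n_l)$, it cannot equal a full-rank scaled identity when $n_{l+1} > n_l$, so the one-line equal-width argument breaks down and one must thread a consistent family of Gram matrices through the whole network. For a fully general existence claim I would make such a construction precise (in the spirit of the balanced initialisations of \cite{saxe1,arora1}); otherwise I would state the corollary for the width patterns on which the trivial orthogonal choice already delivers $C = 0$.
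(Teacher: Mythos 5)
Your reduction is the right one and it is the same as the paper's: $|\Sigma_C|_F = 0$ iff $C = 0$, and once $C = 0$ the integrated identity $W_l W_l^T = W_{l+1}^T W_{l+1}$ forces $|W_l(t)|_F = |W_{l+1}(t)|_F$ for all $t$ (your route through Lemma \ref{lem:bounded_diff} with constant $\lambda|\Sigma_C|_2 = 0$ works; taking traces of the integrated identity is even more direct), and chaining over adjacent pairs gives the claim for arbitrary $l,k$. The gap --- which you yourself flagged --- is the existence of such an initialisation for the network actually described in (\ref{eq:lnn}), whose widths $n_1,\dots,n_{L+1}$ are arbitrary. Your explicit construction (each $W_l(t_0)$ orthogonal, all Gram matrices equal to $I$) covers only the equal-width case, and your rank argument correctly shows it cannot be patched naively: if $n_{l+1} > n_l$ then $W_l W_l^T$ has rank at most $n_l$, while a semi-orthogonal $W_{l+1}$ gives $W_{l+1}^T W_{l+1} = I_{n_{l+1}}$, so the two cannot be equal. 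Since the corollary is stated for general layer dimensions, a complete proof must supply the general construction rather than remark that it should exist or restrict the statement.

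The paper closes exactly this hole, and despite the name ``orthogonal initialisation'' its condition is not orthogonality of the $W_l$ themselves but a balance condition on SVD factors: write $W_l(t_0) = U_l \Sigma_l V_l^T$ and impose $V_{l+1}(t_0) = U_l(t_0)$ together with $\Sigma_l \Sigma_l^T = \Sigma_{l+1}^T \Sigma_{l+1}$ for every $l$. Then
\begin{equation*}
C = V_{l+1}\Sigma_{l+1}^T \Sigma_{l+1} V_{l+1}^T - U_l \Sigma_l \Sigma_l^T U_l^T = U_l\bigl(\Sigma_{l+1}^T\Sigma_{l+1} - \Sigma_l \Sigma_l^T\bigr)U_l^T = 0,
\end{equation*}
which is the clean way to see the paper's conclusion (its own algebra, passing through a norm inequality, is more roundabout). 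This family of conditions is simultaneously satisfiable for any width pattern: it only requires the squared singular values, padded with zeros to size $n_{l+1}$, to agree across adjacent layers, which one achieves by giving every layer the same nonzero singular values up to a common rank $r \leq \min_j \min(n_j, n_{j+1})$; the factors $U_l, V_l$ are then free orthogonal matrices subject to $V_{l+1} = U_l$. To finish your proof you would replace your orthogonal choice with this balanced one, which degenerates to yours exactly when all widths are equal and $r$ is full.
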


\begin{proof}
%Consider the singular value decomposition (SVD) of $W_l(t_0)$ and $W_{l+1}(t_0)$ -
Consider the singular value decomposition (SVD) of $W_l(t_0)$ -

%\begin{align} 
% & W_l(t_0) = U_l(t_0) \Sigma_l(t_0) V_l^T(t_0) \nonumber \\
% & W_{l+1}(t_0) = U_{l+1}(t_0) \Sigma_{l+1}(t_0) V_{l+1}^T(t_0) \nonumber
%%\label{eq:ortho_init_1}
%\end{align}

\begin{equation*} 
 W_l(t_0) = U_l(t_0) \Sigma_l(t_0) V_l^T(t_0)
% & W_{l+1}(t_0) = U_{l+1}(t_0) \Sigma_{l+1}(t_0) V_{l+1}^T(t_0) \nonumber
%\label{eq:ortho_init_1}
\end{equation*}

Initialise $\Sigma_l(t_0) \Sigma_l(t_0)^T = \Sigma_{l+1}(t_0)^T \Sigma_{l+1}(t_0)$, $V_{l+1} (t_0) = U_l(t_0)$ $\forall{l}$. From (\ref{eq:bounded_diff_proof_2}),

\begin{align} 
 & |\Sigma_C|_F = |C|_F = |V_{l+1}^T(t_0) C V_{l+1}(t_0)|_F \nonumber \\
 & = |\Sigma_{l+1}^T(t_0)  \Sigma_{l+1}(t_0) -  \nonumber \\
 & V_{l+1}(t_0)^T U_l(t_0) \Sigma_{l}(t_0) \Sigma_{l}^T(t_0) U_l^T(t_0) V_{l}^T (t_0)|_F \nonumber \\
 & \leq | |\Sigma_{l+1}^T(t_0)  \Sigma_{l+1}(t_0)|_F - |\Sigma_{l}(t_0) \Sigma_{l}^T(t_0)|_F | = 0 \nonumber \\
 & \Rightarrow |\Sigma_C|_F = 0 \nonumber
%\label{eq:ortho_init_2}
\end{align}

\end{proof}

With the initialisation in corollary \ref{cor:ortho_init}, the weight matrices start from the same weight initially, and evolve exactly at the same pace so that their norms are the same at all times. For realistic starting points such as the \textit{Glorot initialisation} (\cite{glorot1}), we can expect $|\Sigma_C|_F \sim 0$.

\begin{corollary}
Let the weight matrices be Glorot initialised as in (\cite{glorot1}). Then, $\mathbb{E} [ | |W_l(t)|_F - |W_k(t)|_F|] = 0 ,  l,k \in \{1,\cdots,L\}, \forall(t)$.
\label{cor:glorot_init}
\end{corollary}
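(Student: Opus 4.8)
The plan is to reduce the statement to a computation at the initialisation time $t_0$ and then transport it to all $t$ using the exact dynamical invariant of Theorem \ref{thm:sym}. First I would take the trace of both sides of Theorem \ref{thm:sym}. Since $\mathrm{tr}(W_{l+1}^T W_{l+1}) = |W_{l+1}|_F^2$ and $\mathrm{tr}(W_l W_l^T) = |W_l|_F^2$, this yields
\begin{equation}
\frac{d}{dt} |W_{l+1}(t)|_F^2 = \frac{d}{dt} |W_l(t)|_F^2,
\end{equation}
so the difference $|W_{l+1}(t)|_F^2 - |W_l(t)|_F^2$ is constant in $t$ and equals its value at $t_0$. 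Chaining this across adjacent layers (exactly as in the corollary following Lemma \ref{lem:bounded_diff}) shows that for any pair $l,k$ the quantity $|W_k(t)|_F^2 - |W_l(t)|_F^2$ is frozen at its initial value for all $t$. Hence it suffices to control the \emph{initial} gap in the squared norms.

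Next I would compute the expected squared norm at initialisation. Under Glorot initialisation each entry of $W_l(t_0)$ is drawn i.i.d.\ with mean zero and variance $v_l = 2/(n_l + n_{l+1})$, so by linearity of expectation
\begin{equation}
\mathbb{E}\big[|W_l(t_0)|_F^2\big] = n_l\, n_{l+1}\, v_l = \frac{2\, n_l\, n_{l+1}}{n_l + n_{l+1}}.
\end{equation}
For a network of uniform width $n_l \equiv n$ this evaluates to $n$ for every layer, so $\mathbb{E}[\,|W_k(t_0)|_F^2 - |W_l(t_0)|_F^2\,] = 0$; combined with the time-invariance above, $\mathbb{E}[\,|W_k(t)|_F^2 - |W_l(t)|_F^2\,] = 0$ for all $t$. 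This is the Glorot analogue of the exact balance achieved by the orthogonal initialisation of Corollary \ref{cor:ortho_init}: the scaling of the Glorot variances is precisely what equalises the expected layer energies, so the expected discrepancy \emph{vanishes} rather than merely being bounded as in Lemma \ref{lem:bounded_diff}.

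The main obstacle is the gap between this squared-norm-in-expectation identity and the literal claim $\mathbb{E}[\,|\,|W_l(t)|_F - |W_k(t)|_F\,|\,] = 0$. Because the integrand is a nonnegative absolute value, an expectation of exactly zero forces $|W_l(t)|_F = |W_k(t)|_F$ almost surely, which a finite-width random initialisation cannot satisfy exactly. I would therefore either (i) read the statement as equality of the expected (squared) norms, $\mathbb{E}[|W_l(t)|_F^2] = \mathbb{E}[|W_k(t)|_F^2]$, which the two displays above establish directly, or (ii) pass to a large-width / limiting regime and use concentration of $|W_l(t_0)|_F^2$ about its mean together with the frozen gap to argue that the fluctuations of $|\,|W_l|_F - |W_k|_F\,|$ vanish in the limit. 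Making (ii) rigorous, namely quantifying the deviation of the Frobenius norm from its expectation and propagating it through the generally non-uniform factor $2 n_l n_{l+1}/(n_l+n_{l+1})$, is the delicate part, and is also where the assumption of equal layer widths (or a correspondingly adjusted variance convention) would have to be made explicit.
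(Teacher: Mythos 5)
Your proposal is correct in substance and follows the same two-step skeleton as the paper (a time-invariance argument from Theorem \ref{thm:sym} plus an expectation computation at $t_0$), but it differs in three useful ways. First, you obtain the invariance by taking traces of Theorem \ref{thm:sym}, which freezes $|W_{l+1}(t)|_F^2 - |W_l(t)|_F^2$ directly; the paper instead routes through the matrix integration constant $C = W_{l+1}^T W_{l+1} - W_l W_l^T$ of Lemma \ref{lem:bounded_diff} and its SVD, and your trace shortcut is cleaner and loses nothing. Second, you use the standard Glorot variance $2/(n_l+n_{l+1})$, under which $\mathbb{E}[|W_l(t_0)|_F^2] = 2 n_l n_{l+1}/(n_l+n_{l+1})$ matches across layers only for uniform widths, forcing the extra assumption you flag at the end. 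The paper's own convention is different: it samples entries with variance $\beta/(n_l n_{l+1})$ (product of widths, not sum), which gives $\mathbb{E}[W_l W_l^T] = \mathbb{E}[W_{l+1}^T W_{l+1}] = (\beta/n_{l+1}) I_{n_{l+1}}$, hence $\mathbb{E}[|W_l(t_0)|_F^2] = \beta$ for \emph{every} layer regardless of the widths $n_l$ — so no uniform-width hypothesis is needed under the paper's convention, and your caveat is an artifact of the (historically more faithful) convention you chose. Third, your observation that $\mathbb{E}\big[\,|\,|W_l(t)|_F - |W_k(t)|_F\,|\,\big] = 0$ forces almost-sure equality, which a finite-width random initialisation cannot deliver, is a genuine and correct criticism — and it applies equally to the paper's own proof, which asserts $\mathbb{E}[|\Sigma_C|_F] = 0$ while simultaneously exhibiting a strictly positive variance bound for $|\Sigma_C|_F$; a nonnegative random variable with zero mean is almost surely zero and has zero variance, so these two claims are incompatible. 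What both arguments actually establish is the weaker (and correct) statement $\mathbb{E}[\Sigma_C] = 0$, equivalently $\mathbb{E}\big[|W_l(t)|_F^2\big] = \mathbb{E}\big[|W_k(t)|_F^2\big]$ for all $t$, which is your reading (i); your proposal is the more honest account of what can be proven.
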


\begin{proof}

In the Glorot initialisation, elements of $W_l(t_0)$ are i.i.d sampled $\sim \mathcal{N}(0,(\sqrt{\frac{\beta}{n_l n_{l+1}})}), \beta > 0$.

\begin{align} 
 & \Rightarrow \mathbb{E} [W_l(t_0)  W_l(t_0)^T ] = \frac{\beta}{n_{l+1}} I_{n_{l+1}}, var(|W_{l}| _{F}^{2})=\frac{2\beta^{2}}{n_{l}n_{l+1}} \nonumber \\
 & \text{and}, \mathbb{E} [W_{l+1}^T(t_0)  W_{l+1}(t_0)^T ] = \frac{\beta}{n_{l+1}} I_{n_{l+1}}, var(|W_{l+1}| _{F}^{2})=\frac{2\beta^{2}}{n_{l+1}n_{l+2}}\nonumber \\
 & \Rightarrow \mathbb{E} [|\Sigma_C|_F] = 0, var(|\Sigma_{C}| _{F})\leq2\beta^{2}(\frac{1}{n_{l}n_{l+1}}+\frac{1}{n_{l+1}n_{l+2}})\nonumber 
% & \text{it can be shown} \  var[|\Sigma_C|_F] = O(\beta)
%\label{eq:ortho_init_1}
\end{align}

From lemma \ref{cor:ortho_init}, $\mathbb{E} [ | |W_l(t)|_F - |W_k(t)|_F|] = 0$.

\end{proof}

Thus, realistic initialisation schemes, at least in LNNs, have the effect of evolving different layers at the same rate, so that there is no bottleneck layer, or one rogue layer does not destabilise the whole learning process. There are stronger guarantees for the evolution of the weight matrices if we consider particular cases of the loss function $l(\hat{y},y)$ in (\ref{eq:learning_algo}). For example, consider the $\mathbf{L}_2$ loss $l(\hat{y},y) = \frac{1}{2}|\hat{y}-y|^2$ in (\ref{eq:gd_ode_lnn}). 
%For general neural nets (GNNs), at least in the initial part of the training when the activation functions are expected to operate in their linear regions, the growth of the weight matrices can be expected to be uniform across layers. 

\begin{corollary}
Suppose that compared to corollary \ref{cor:ortho_init}, we have $V_{l+1} (t_0) = U_l(t_0)$ $\forall{l}$, but $\Sigma_l(t_0) \Sigma_l(t_0)^T \neq \Sigma_{l+1}(t_0)^T \Sigma_{l+1}(t_0)$. Also, let $\Sigma_{yx} = U_{yy} S_{yx} V_{xx}^T$ be the SVD of $\Sigma_{yx}$ and $U_L = U_{yy}, V_1 = V_{xx}$. Then $\Sigma_l(t) \Sigma_l(t)^T \rightarrow \Sigma_{l+1}(t)^T \Sigma_{l+1}(t)$ as $t \rightarrow \infty, \forall l$.
\label{cor:ortho_init_2}
\end{corollary}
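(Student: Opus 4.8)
The plan is to specialise the gradient-flow system (\ref{eq:gd_ode_lnn}) to the squared loss and then exploit the alignment hypothesis to collapse the matrix dynamics onto a set of decoupled scalar ODEs, one per input--output mode, after which the claim reduces to a statement about the common fixed point of these scalar systems together with the conserved quantity of Theorem \ref{thm:sym}. First I would substitute $l(\hat y, y) = \tfrac12 |\hat y - y|^2$ into (\ref{eq:gd_ode_lnn}): the bracketed residual becomes $\tfrac1m\sum_i(\hat y_i - y_i)x_i^T = W\Sigma_{xx} - \Sigma_{yx}$, where $W = W_L\cdots W_1$, $\Sigma_{xx} = \tfrac1m\sum_i x_i x_i^T$ and $\Sigma_{yx} = \tfrac1m\sum_i y_i x_i^T$. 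Following the Saxe-type reduction I would assume whitened inputs, $\Sigma_{xx} = I$, so that the residual is simply $W - \Sigma_{yx}$.

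Next, using the chain of alignments $V_1 = V_{xx}$, $U_l = V_{l+1}$ for $l = 1,\dots,L-1$, and $U_L = U_{yy}$, the adjacent orthogonal factors telescope in $W = U_L\Sigma_L V_L^T \cdots U_1\Sigma_1 V_1^T$, giving $W = U_{yy}\big(\prod_{l}\Sigma_l\big)V_{xx}^T$. Since $\Sigma_{yx} = U_{yy} S_{yx} V_{xx}^T$ shares these singular directions, the loss decouples across modes: writing $\sigma_l^\alpha(t)$ for the $\alpha$-th singular value of layer $l$ and $a^\alpha = \prod_l \sigma_l^\alpha$ for the end-to-end mode strength, the objective is $\tfrac12\sum_\alpha (a^\alpha - s^\alpha)^2$. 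The step I would treat most carefully is showing that this aligned configuration is an \emph{invariant manifold} of the flow: that the right-hand side of (\ref{eq:gd_ode_lnn}), evaluated on aligned factors, produces updates $\dot W_l$ that keep the singular directions fixed and move only the singular values. Granting this, (\ref{eq:gd_ode_lnn}) reduces to the scalar system $\tau\,\dot\sigma_l^\alpha = -(a^\alpha - s^\alpha)\,a^\alpha/\sigma_l^\alpha$, whose end-to-end form $\tau\,\dot a^\alpha = -(a^\alpha - s^\alpha)(a^\alpha)^2\sum_l (\sigma_l^\alpha)^{-2}$ has $a^\alpha = s^\alpha$ as its unique stable fixed point for $s^\alpha>0$; hence $a^\alpha(t)\to s^\alpha$.

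Finally I would combine this convergence with the conserved quantity of Theorem \ref{thm:sym}. Multiplying the scalar ODE by $2\sigma_l^\alpha$ shows that $\tfrac{d}{dt}(\sigma_l^\alpha)^2 = -2(a^\alpha - s^\alpha)a^\alpha$ is independent of $l$, so $(\sigma_l^\alpha)^2 - (\sigma_{l+1}^\alpha)^2$ is constant in time and equals its value at $t_0$ --- exactly the diagonal entries of the conserved matrix $C$ of Lemma \ref{lem:bounded_diff}. As $t\to\infty$ the constraint $\prod_l\sigma_l^\alpha\to s^\alpha$ together with these fixed offsets pins down each $(\sigma_l^\alpha)^2$, so that the per-layer squared-singular-value spectra coincide in the limit, i.e. $\Sigma_l\Sigma_l^T \to \Sigma_{l+1}^T\Sigma_{l+1}$. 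I expect the genuine obstacle to be twofold: first, establishing invariance of the aligned manifold rigorously (rather than merely at $t_0$), which is what justifies the mode-wise decoupling and the persistence of $V_{l+1}(t)=U_l(t)$; and second, being honest about the exactness of the limit, since $C$ is conserved the difference is strictly constant, so the convergence is exact precisely when the initial magnitude imbalance vanishes and is otherwise understood asymptotically, the fixed offsets $(\sigma_l^\alpha)^2-(\sigma_{l+1}^\alpha)^2$ becoming negligible against the common scale $(s^\alpha)^{2/L}$ as the strong modes saturate.
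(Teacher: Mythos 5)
Your reduction to the Saxe-type decoupled scalar system is exactly the route the paper takes: its proof quotes the per-mode dynamics (\ref{eq:sigma_saxe}), $\tau\dot\sigma_{k,l} = \prod_{i\neq l}\sigma_{k,i}\,(s_k - \prod_i\sigma_{k,i})$, citing Saxe et al.\ for the decoupling on the aligned manifold (the invariance step you rightly flag as the delicate one), and then argues mode by mode. Where you part ways is the concluding step, and your version is the more careful one. The paper fixes a mode $k$ and layers $l,l'$, observes $\dot\sigma_{k,l}/\dot\sigma_{k,l'} = \sigma_{k,l'}/\sigma_{k,l}$, shows the ratio $r(t)=\sigma_{k,l'}/\sigma_{k,l}>1$ strictly decreases, and concludes that it ``smoothly approaches 1.'' That inference is a non sequitur: a decreasing quantity bounded below by $1$ need not converge to $1$. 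Your conservation law pinpoints why it in fact does not: since $\frac{d}{dt}(\sigma_{k,l})^2 = -2(a^k-s^k)a^k/\tau$ is independent of $l$, the difference $(\sigma_{k,l'})^2-(\sigma_{k,l})^2$ is exactly constant along the flow, and at the stable fixed point $\prod_i\sigma_{k,i}=s_k$ the singular values remain finite, so the limiting ratio is $\sqrt{1+c/(\sigma^*)^2}>1$ whenever the initial offset $c$ is nonzero. Equivalently, $\Sigma_l\Sigma_l^T - \Sigma_{l+1}^T\Sigma_{l+1}$ is a constant of motion (it is precisely the conserved matrix $C$ of Lemma \ref{lem:bounded_diff} restricted to this initialisation), so the corollary's conclusion, read literally as this difference tending to zero, cannot hold under the stated hypothesis $\Sigma_l(t_0)\Sigma_l(t_0)^T\neq\Sigma_{l+1}(t_0)^T\Sigma_{l+1}(t_0)$; only the relative statement you end with (offsets negligible against the common scale when the $s^\alpha$ are large) survives. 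So: same decomposition and same scalar ODEs, but your conserved-quantity argument both replaces the paper's flawed ratio argument and exposes that the statement itself needs to be weakened to an approximate, relative-error claim.
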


\begin{proof}
Let $\Sigma_l = diag\{ \sigma_{1,l},\cdots,\sigma_{K,l}\}$, $S_{yx} = diag\{ s_1,\cdots \}$, where $K = \min\{n_l,n_{l+1}\}$. Then, for the initialisation in corollary \ref{cor:ortho_init_2}, \cite{saxe1} has shown that for the $\mathbf{L}_2$ loss, the same modes in each layer influence each other independently of the other modes (the so called decoupling) to evolve as

\begin{equation}
\label{eq:sigma_saxe}
%\nonumber
\tau \dot{\sigma_{k,l}} = \prod_{i \in [L]\setminus l} \sigma_{k,i} \Big( s_k -  \prod_{i \in [L]}\sigma_{k,i} \Big)
\end{equation}

Denoting $s_k -  \prod_{i \in [L]}\sigma_{k,i} = c_k$, it being a layer-independent constant for each mode $k$, we get, $\tau \dot{\sigma_{k,l}} = c_k \prod_{i \in \{[L]\setminus l}\sigma_{k,i}$. Let us now consider the case when $\sigma_{k,l}>0$ for all $l$. Denoting $d_k = \prod_{i \in [L]}\sigma_{k,i}$, in this case, from (\ref{eq:sigma_saxe}), we can write $\frac{\dot{\sigma_{k,l}}}{\dot{\sigma_{k,l'}}} = \frac{c_k d_k}{\sigma_{k,l}} \frac{\sigma_{k,l'}}{c_k d_k} =  \frac{\sigma_{k,l'}}{\sigma_{k,l}}$. Suppose wlog  $\frac{\sigma_{k,l'}(t_0)}{\sigma_{k,l}(t_0)} = r_{k, l,l'} > 1$. Then, 

\begin{align*}
& \frac{\sigma_{k,l'}(t_0+\delta t)}{\sigma_{k,l}(t_0+\delta t)} = \frac{\sigma_{k,l'}(t_0) + \dot{\sigma_{k,l'}}(t_0) \delta t}{\sigma_{k,l}(t_0) + \dot{\sigma_{k,l}}(t_0) \delta t}\\\nonumber
&= \frac{r_{k,l,l'}\sigma_{k,l}(t_0) + \frac{\dot{\sigma_{k,l}}(t_0) \delta t}{r_{k,l,l'}}}{\sigma_{k,l}(t_0) + \dot{\sigma_{k,l}}(t_0) \delta t} = \hat{r}_{k,l,l'} < r_{k,l,l'}
\end{align*}

For $\delta \rightarrow 0$, $1 \leq \hat{r}_{k,l,l'} < r_{k,l,l'} $, and repeating the argument $\forall t$ the ratio will smoothly approach 1, $\Rightarrow \Sigma_l(t) \Sigma_l(t)^T \rightarrow \Sigma_{l+1}(t)^T \Sigma_{l+1}(t)$ as $t \rightarrow \infty, \forall l$.

Clearly, $1 \leq \hat{r}_{k,l,l'} < r_{k,l,l'} $. Taking $\delta \rightarrow 0$ and repeating the argument $\forall t$, $ \frac{\hat{r}_{k,l,l'}}{r_{k,l,l'}} \rightarrow 1 $ $\Rightarrow \Sigma_l(t) \Sigma_l(t)^T \rightarrow \Sigma_{l+1}(t)^T \Sigma_{l+1}(t)$ as $t \rightarrow \infty, \forall l$.

\end{proof}

Additionally, we can show that in case of $\mathbf{L}_2$ loss the update equation for the $W_l$-th matrix in (\ref{eq:gd_ode_lnn}) now becomes -

%\begin{equation} 
%l(\hat{y},y) = \frac{1}{2}|\hat{y}-y|^2 
%\label{eq:l_2 loss}
%\end{equation}

%in (\ref{eq:gd_ode_lnn}). Applying backpropagation via chain rule, we can show that the update equation for the $W_l$-th matrix in (\ref{eq:gd_ode_lnn}) is given by -

%\begin{equation} 
%\tau \frac{d W_l}{dt} = \Big[\prod_{i=l+1}^{L } W_i \Big]^T \Big[\Sigma_{yx} - \Big(\prod _{i=1}^{L} W_l \Big) \Sigma_{xx}\Big]  \Big[\prod_{i=1}^{l -1} W_i \Big]^T
%\label{eq:gd_ode_lnn_l2}
%\end{equation}

\begin{equation} 
\tau \frac{d W_l}{dt} = \Big[\prod_{i=l+1}^{L } W_i \Big]^T \Big[\Sigma_{yx} - \Big(\prod _{i=1}^{L} W_l \Big) \Big]  \Big[\prod_{i=1}^{l -1} W_i \Big]^T.
\label{eq:gd_ode_lnn_l2_whitened}
\end{equation}

\noindent where $\Sigma_{yx} = \frac{1}{m} \sum_{i=1}^m y_i x_i^T$ is the \textit{input-output covariance} matrix, and the input has been assumed to have been \textit{whitened} (i.e., $\Sigma_{xx} = \frac{1}{m} \sum_{i=1}^m x_i x_i^T= I_d$ ). 
%A popular choice for simplification of the subsequent analysis is a \textit{whitened input}, or preprocessing $x$ such that $\Sigma_{xx} = I_d$, thus (\ref{eq:gd_ode_lnn_l2}) becomes 

%\begin{equation} 
%\tau \frac{d W_l}{dt} = \Big[\prod_{i=l+1}^{L } W_i \Big]^T \Big[\Sigma_{yx} - \Big(\prod _{i=1}^{L} W_l \Big) \Big]  \Big[\prod_{i=1}^{l -1} W_i \Big]^T.
%\label{eq:gd_ode_lnn_l2_whitened}
%\end{equation}

It is easy to infer about the upper bound on the rate of growth of the norm of the weight matrices from (\ref{eq:gd_ode_lnn_l2_whitened}) -

\begin{equation} 
\tau \frac{d |W_l|_F}{dt} \leq \tau \Big\vert \frac{d W_l}{dt}\Big\vert_F
\leq \Big\vert \prod_{i=1,i \neq l}^L W_i \Big\vert \Big\vert \Sigma_{yx} - \Big(\prod _{j=1}^{L} W_j \Big) \Big\vert_F
\label{eq:ub_ode_lnn_l2}
\end{equation}

%\noindent which immediately gives us the following observation 
%
%\begin{remark}
%For the ODE in (\ref{eq:gd_ode_lnn_l2_whitened}), if $|W_l(t)|_F \geq |W_k(t)|_F$, then $\max [\frac{d |W_l(t)|_F}{dt}] \leq \max [\frac{d |W_k(t)|_F}{dt}]$.
%%$\frac{d |W_l(t)|_F}{dt} \leq \frac{d |W_k(t)|_F}{dt}$.
%\end{remark}
%
%\begin{proof}
%
%Express (\ref{eq:ub_ode_lnn_l2}) as
%
%\begin{equation} 
%\tau \frac{d |W_l|_F}{dt} \leq \big(\frac{\prod_{i=1}^L |W_i|_F}{|W_l|_F} \Big) \Big\vert \Sigma_{yx} - \Big(\prod _{j=1}^{L} W_j \Big) \Big\vert_F
%\label{eq:ub_ode_lnn_l2_mod}
%\end{equation}
%
%and the observation follows.
%
%\end{proof}
%
%(\ref{eq:ub_ode_lnn_l2_mod}) implies that although the norm of the weight matrices stay close to each other, layers which have a higher norm at initialisation have the possibility of growing at a smaller pace than layers which have lower norm at initialisation.

%For a LNN that has orthogonal initialisation as in corollary \ref{cor:ortho_init}, an exact evolution of the norm described in (\ref{eq:ub_ode_lnn_l2}) can be guaranteed - 
%
%\begin{align} 
%\tau \frac{d |W_l|_F}{dt} & = \big(\prod_{i=1,i \neq l}^L |W_i|_F \Big) \Big\vert \Sigma_{yx} - \Big(\prod _{j=1}^{L} W_j \Big) \Big\vert_F \nonumber \\
%& = |W_l|_F^{L-1} ( |\Sigma_{yx}|_F - |W_l|_F^L )
%\label{eq:exact_ode_lnn_l2_ortho}
%\end{align}

For a general LNN with arbitrary initialisation, it is difficult to arrive at an exact rate of evolution of the norm of weight matrices. However, it is imperative from (\ref{eq:gd_ode_lnn_l2_whitened}) that $\prod_{i=1}^L W_l$ reaches $\Sigma_{yx}$ at convergence. Therefore, we can study the evolution of a modified upper bound in (\ref{eq:ub_ode_lnn_l2}), which will allow us to bound the fastest possible time for the LNN to reach the solution. This is captured by the following lemma - 

\begin{theorem} Assume that the LNN described in (\ref{eq:lnn}) has a reasonable initialisation of weight matrices such as given by corollary \ref{cor:glorot_init} for  at time $t_0$. Also assume $\triangle \doteq \arg \underset{l,k \in L}{\max} ||W_l|_F - |W_k|_F | $ be the maximum difference between any two layer norms, which is determined at initialisation (and invariant over time). Define

%\begin{subequations} 
\begin{equation}
U = (|W_l|_F + \triangle)^L 
\label{eq:defn_U}
\end{equation}
%\begin{equation}
%u = U^{\frac{1}{L}} = |W_l|_F + \triangle
%\label{eq:defn_u}
%\end{equation}
%\end{subequations}

Then there exists constants $\kappa_1 > 0$ and  $ 0 \leq \kappa_2 \leq 1$ such that the combined and individual layer strengths evolve at a maximum rate of

%\begin{align} 
%\tau  \frac{d U}{dt} & \leq L U^{2(1-\frac{1}{L})} [(2 \kappa_1 + 1) |\Sigma_{yx}|_F - \kappa_2 U] \\
%\tau  \frac{d u}{dt} & \leq u^{L-1} [(2 \kappa_1 + 1) |\Sigma_{yx}|_F - \kappa_2 u^L]
%\label{eq:defn_evolve_U_u}
%\end{align}.

\begin{equation} 
\tau  \frac{d U}{dt} \leq L U^{2(1-\frac{1}{L})} [(2 \kappa_1 + 1) |\Sigma_{yx}|_F - \kappa_2 U] 
%\tau  \frac{d u}{dt} & \leq u^{L-1} [(2 \kappa_1 + 1) |\Sigma_{yx}|_F - \kappa_2 u^L]
\label{eq:defn_evolve_U_u}
\end{equation}.

Also, for very deep LNNs where $L \rightarrow \infty$, $t_U$ = time required to reach a combined mode strength $U$ from initial state $U_0$, or alternatively, time required for individual layer norms to reach $O(U^{\frac{1}{L}})$ from an initial point of $O(U_0^{\frac{1}{L}})$ is given by 

\begin{equation} 
t_U \geq \frac{\kappa_2 \tau}{L} \Big[ \frac{1}{M^2} \log{\frac{U(M - U_0)}{U_0(M - U)} } - \frac{1}{M U} + \frac{1}{M U_0} \Big]
\label{eq:defn_t_U}
\end{equation}

where $M = (2 \kappa_1 + 1) |\Sigma_{yx}|_F$.

\label{lem:fastest_time_lnn}
\end{theorem}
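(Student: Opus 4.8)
The plan is to convert the norm-growth bound (\ref{eq:ub_ode_lnn_l2}) into a closed autonomous differential inequality for the scalar $U=(|W_l|_F+\triangle)^L$ and then integrate it. First I would control the two factors on the right of (\ref{eq:ub_ode_lnn_l2}) separately. For the product factor, the bounded-difference results (Lemma \ref{lem:bounded_diff} and its corollary) guarantee $|W_i|_F \le |W_l|_F + \triangle$ for every layer $i$, so submultiplicativity of the Frobenius norm gives $\big|\prod_{i\neq l} W_i\big| \le \prod_{i\neq l}|W_i|_F \le (|W_l|_F+\triangle)^{L-1}$. For the residual factor, writing $P=\prod_{j=1}^L W_j$ and using the triangle inequality yields $|\Sigma_{yx}-P|_F \le |\Sigma_{yx}|_F + |P|_F$.

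Next I would pass from $|W_l|_F$ to $U$ by the chain rule, which produces the factor $L(|W_l|_F+\triangle)^{L-1}$; combined with the product bound this gives $\tau\,dU/dt \le L(|W_l|_F+\triangle)^{2(L-1)}\,|\Sigma_{yx}-P|_F = L\,U^{2(1-1/L)}\,|\Sigma_{yx}-P|_F$, using $(|W_l|_F+\triangle)^{2(L-1)}=U^{2(L-1)/L}$. The crucial step is then to re-express the residual. I would introduce the ratios $\kappa_1 = |P|_F/|\Sigma_{yx}|_F$ and $\kappa_2 = |P|_F/U$; submultiplicativity forces $|P|_F\le U$ so $\kappa_2\in[0,1]$, while $\kappa_1>0$ for nonzero weights. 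Splitting $|P|_F = 2|P|_F - |P|_F$ and substituting one copy from each ratio produces the identity $|\Sigma_{yx}|_F+|P|_F = (2\kappa_1+1)|\Sigma_{yx}|_F-\kappa_2 U$, which is exactly the bracket in (\ref{eq:defn_evolve_U_u}). This establishes the differential inequality.

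For the timing estimate I would specialise to $L\to\infty$, where $U^{2(1-1/L)}\to U^2$, so the inequality becomes the logistic-type relation $\tau\,dU/dt \le L\,U^2(M-\kappa_2 U)$ with $M=(2\kappa_1+1)|\Sigma_{yx}|_F$. Because the rate is bounded above by a positive quantity throughout the growth phase, dividing by the right-hand side and integrating reverses the inequality into a lower bound on elapsed time, $t_U \ge \tfrac{\tau}{L}\int_{U_0}^{U}\frac{dU'}{U'^2(M-\kappa_2 U')}$. I would evaluate this by the partial-fraction decomposition $\frac{1}{U^2(M-\kappa_2 U)}=\frac{\kappa_2/M^2}{U}+\frac{1/M}{U^2}+\frac{\kappa_2^2/M^2}{M-\kappa_2 U}$; integrating term by term yields a logarithm of $\frac{U(M-\kappa_2 U_0)}{U_0(M-\kappa_2 U)}$ together with the reciprocal terms $-\frac{1}{MU}+\frac{1}{MU_0}$, reproducing the closed form (\ref{eq:defn_t_U}) up to the placement of the $\kappa_2$ prefactors.

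The main obstacle I anticipate is not the calculus but the status of $\kappa_1$ and $\kappa_2$. As defined they are time-dependent ratios, and the rewriting of the residual is a \emph{pointwise} identity rather than a uniform bound; to integrate the inequality one must treat them as fixed representative constants, which is legitimate only in a regime where these ratios are roughly stable and, crucially, where the logistic right-hand side stays positive (that is, $\kappa_2 U < M$, before saturation). I would therefore restrict the integration to the pre-convergence growth phase and state explicitly that (\ref{eq:defn_t_U}) bounds the fastest admissible evolution under these representative constants rather than tracking the exact trajectory.
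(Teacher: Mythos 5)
Your proposal follows essentially the same route as the paper's own proof: bound the off-layer product by $U^{(L-1)/L}$ using the bounded-difference lemma, rewrite the residual via the $\kappa_1,\kappa_2$ ratios to get the bracket $(2\kappa_1+1)|\Sigma_{yx}|_F-\kappa_2 U$, pass to $dU/dt$ by the chain rule, and integrate the $L\to\infty$ logistic-type inequality by partial fractions. Your handling is in fact slightly more careful than the paper's: the paper resolves the time-dependence issue you flag by defining $\kappa_1,\kappa_2$ as uniform (sup/inf over the trajectory) constants from the start, and its stated form of (\ref{eq:defn_t_U}) indeed drops the $\kappa_2$ factors inside the logarithm (as if $\kappa_2=1$) while attaching an overall $\kappa_2$ prefactor --- precisely the placement discrepancy your exact partial-fraction computation exposes.
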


\begin{proof}
%From corollary \ref{cor:glorot_init}, we know that for realistic initialisations, the norm of weight matrices grow closely w.r.t each other. Let $\triangle \doteq \arg \underset{l,k \in L}{\max} ||W_l|_F - |W_k|_F | $ be the maximum difference between any two layer norms, which is determined at initialisation (and invariant over time). Define $ U = (|W_l|_F + \triangle)^L$.

For realistic initialisation as in corollary \ref{cor:glorot_init}, the norm of weight matrices start from a small value ($\sim 0$) and gradually grow over time during training. Hence we can reasonably assume that over the course of training, there exists a (small) $\kappa_1 > 0$ such that $|\Sigma_{yx}|_F \geq \kappa_1 |\prod_{i=1}^L W_i|_F, t \in T$. Here $T$ is the total evolution time.

Similarly, if $0 \leq \kappa_2 \doteq \arg \underset{t \in T}{\min} \frac{ |\prod_{i=1}^L W_i|_F}{(|W_l|_F + \triangle)^L} \leq 1 $, then at least towards convergence we can expect $\kappa_2 \sim 1$. Now,

\begin{align} 
& U = (|W_l|_F + \triangle)^L \Rightarrow |W_l|_F = U^{\frac{1}{L}} - \triangle \nonumber \\
& \Rightarrow \frac{d |W_l|_F}{dt} = \frac{d U^{\frac{1}{L}}}{dt}
\label{eq:evol_gd_1}
\end{align}

From (\ref{eq:ub_ode_lnn_l2}), 

\begin{align} 
& \tau \frac{d |W_l|_F}{dt} \leq \Big\vert \prod_{i=1,i \neq l}^L W_i \Big\vert \Big\vert \Sigma_{yx} - \prod _{j=1}^{L} W_j  \Big\vert_F \nonumber \\
& \Rightarrow \tau  \frac{d U^{\frac{1}{L}}}{dt} \leq U^\frac{L-1}{L} \Big\vert \Sigma_{yx} - \prod _{j=1}^{L} W_j \Big\vert_F
\label{eq:evol_gd_2}
\end{align}

%Now, note that 
%
%\begin{equation} 
%\Big\vert |\Sigma_{yx}|_F - \Big\vert \prod _{j=1}^{L} W_j\Big\vert_F \Big\vert \leq \Big\vert \Sigma_{yx} - \prod _{j=1}^{L} W_j \Big\vert_F \leq  | \Sigma_{yx} |_F + \Big\vert \prod _{j=1}^{L} W_j \Big\vert_F
%\label{eq:evol_gd_3}
%\end{equation}

A simple algebraic manipulation shows that  

\begin{align} 
& \Big\vert \Sigma_{yx} - \prod _{j=1}^{L} W_j \Big\vert_F \leq  (2 \kappa_1 + 1) |\Sigma_{yx}|_F - \Big\vert \prod _{j=1}^{L} W_j \Big\vert_F \nonumber \\
& \leq  (2 \kappa_1 + 1) |\Sigma_{yx}|_F - \kappa_2 U \nonumber
\label{eq:evol_gd_4}
\end{align}

From (\ref{eq:evol_gd_2}),

\begin{align} 
& \tau  \frac{d U^{\frac{1}{L}}}{dt} \leq U^\frac{L-1}{L} [(2 \kappa_1 + 1) |\Sigma_{yx}|_F - \kappa_2 U]  \\
& \Rightarrow \tau  \frac{d U}{dt} \leq L U^{2(1-\frac{1}{L})} [(2 \kappa_1 + 1) |\Sigma_{yx}|_F - \kappa_2 U]
\label{eq:evol_gd_5}
\end{align}

which is claim (\ref{eq:defn_evolve_U_u}). %Claim (\ref{eq:defn_u}) follows trivially from (\ref{eq:evol_gd_5}) by substitution.
For very deep networks where $L \rightarrow \infty$, (\ref{eq:evol_gd_5}) reduces to

%An alternative way to view (\ref{eq:evol_gd_5}) can be by substituting $u = U^{\frac{1}{L}} = |W_l|_F + \triangle$ and noting $\frac{d u}{dt} = \frac{d |W_l|_F}{dt}$, so that (\ref{eq:evol_gd_5}) implies
%
%\begin{equation} 
%\tau  \frac{d u}{dt} \leq u^{L-1} [(2 \kappa_1 + 1) |\Sigma_{yx}|_F - \kappa_2 u^L]
%\label{eq:evol_gd_6}
%\end{equation}

\begin{equation} 
\tau  \frac{d U}{dt} \leq L U^{2} [(2 \kappa_1 + 1) |\Sigma_{yx}|_F - \kappa_2 U]
\label{eq:evol_gd_7}
\end{equation}

which when integrated, implies (\ref{eq:defn_t_U}).

%that $t_U$ = time required to reach combined mode strength $U$ from initial state $U_0$, or alternatively, time required for individual layer norms to reach $O(U^{\frac{1}{L}})$ from start point of $O(U_0^{\frac{1}{L}})$, substituting $M = (2 \kappa_1 + 1) |\Sigma_{yx}|_F$ is 
%
%\begin{equation} 
%t_U \geq \frac{\kappa_2 \tau}{L} \Big[ \frac{1}{M^2} \log{\frac{U(M - U_0)}{U_0(M - U)} } - \frac{1}{MU} \Big]
%\label{eq:evol_gd_8}
%\end{equation}

\end{proof}

Theorem \ref{lem:fastest_time_lnn} suggests that the fastest time to convergence, for a small and fixed learning rate (such that the ODE in (\ref{eq:gd_ode_lnn_l2_whitened}) is a valid approximation to the discrete evolution via gradient descent) does improve with increasing depth, and can reach a very small value for a very deep network. This is a generalisation of the orthogonally initialised network in \cite{saxe1}. 
%We will demonstrate the peculiarities of the growth stages of the evolution in (\ref{eq:defn_U}) in the experiments section.

\section{Linearisation of Neural Networks}\label{sec:gnn}

%Section \ref{sec:lnn} illustrated interesting properties of layer growth for arbitrary LNNs. 
In this section, we will describe a linearisation technique for applying the insights gained from section \ref{sec:lnn} in understanding the behaviour of layer growth in some forms of GNNs.

Consider the GNN in (\ref{eq:gnn}), and specifically focus on the case of no bias and $\sigma$ being the RelU function; a corresponding time-dependent matrix form can be written for (\ref{eq:gnn}) for one individual training point $\{ x_i,y_i \}$ as 

\begin{equation} 
\hat{y}_i = \phi(D_{L,i,t} W_L(t) D_{L-1,i,t} W_{L-1}(t) \cdots D_{1,i,t} W_1(t) x_i)
\label{eq:relu}
\end{equation}

where in (\ref{eq:relu}), $D_{l,i,t}$ is a diagonal $n_{l+1} \times n_{l+1}$ matrix capturing the time dependent effect of RelU for sample $i$. That is, the $j$-th diagonal entry of $D_{l,i,t}$ is $0$ or $1$ depending on whether the $j$-th output of $W_{l}(t) D_{l-1,i,t} W_{l-1}(t) \cdots D_{1,i,t} W_1(t) x_i$ is $\leq 0$ or $>0$ respectively. Let $\phi$ be the softmax function in (\ref{eq:relu}). 

\begin{figure*}[!t]
\centering
\includegraphics[width=\textwidth]{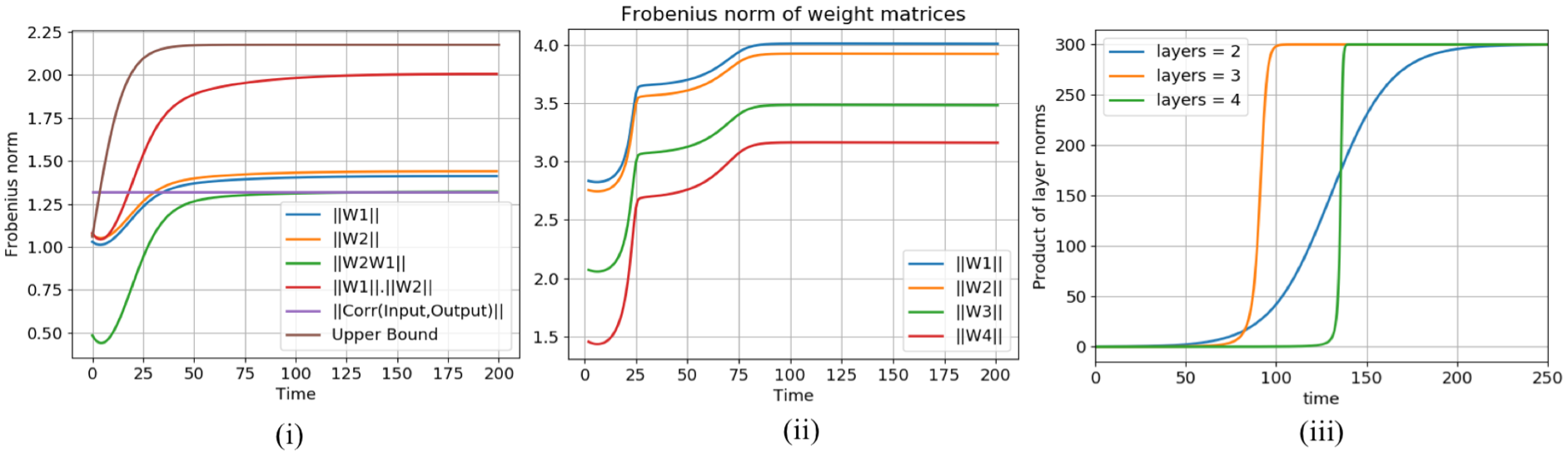}
\caption{Growth of layer norms in a LNN (please view in colour). (i) In a 2 layer scenario, individual layers grow at the same rate, staying equidistant to each other. The upper bound (in brown) dominates closely the product of norms (in red) (ii) In a 4 layer scenario, layers again grow at the same rate (iii) Analysis of upper bound in (\ref{eq:defn_evolve_U_u}) for 2,3 and 4 layers show the effect of depth in attaining convergence} 
\label{fig:lnn}
\end{figure*}

In order to apply insights from LNN to (\ref{eq:relu}), a reasonable assumption is that inside a time window $[ t - \triangle t, t + \triangle t]$, the behaviour of the network is constant wrt the RelU, that is, $D_{l,i,t}$ is constant $\doteq \bar{D}_{l,i}$. $\bar{D}_{l,i}$ denotes the \textit{piecewise linearisation} of a nonlinear RelU network along the time and sample dimensions, and allows us to apply LNN theory to nonlinear networks. Then the corresponding form of (\ref{eq:gd_ode_lnn}) for sample $i$, with $l$ as the cross-entropy loss becomes 

\begin{equation} 
\tau \bar{D}_{li} \frac{d W_l}{dt} = - \Big[\prod_{j=l+1}^{L } \bar{D}_{ji} W_j \Big]^T \Big[ (y_i - o_i)] x_i^T  \Big]  \Big[\prod_{j=1}^{l -1} \bar{D}_{ji} W_j \Big]^T.
\label{eq:gd_ode_relu}
\end{equation}

%In similar lines to the proof in theorem \ref{thm:sym} where we replace $W_l^T$ by $\bar{D}_{li}^T W_l^T$, and recalling that we are treating $\bar{D}_{li}$ as a constant inside a time window, we can show

\begin{figure}[!h]
\centering
\includegraphics[width=3.2 in]{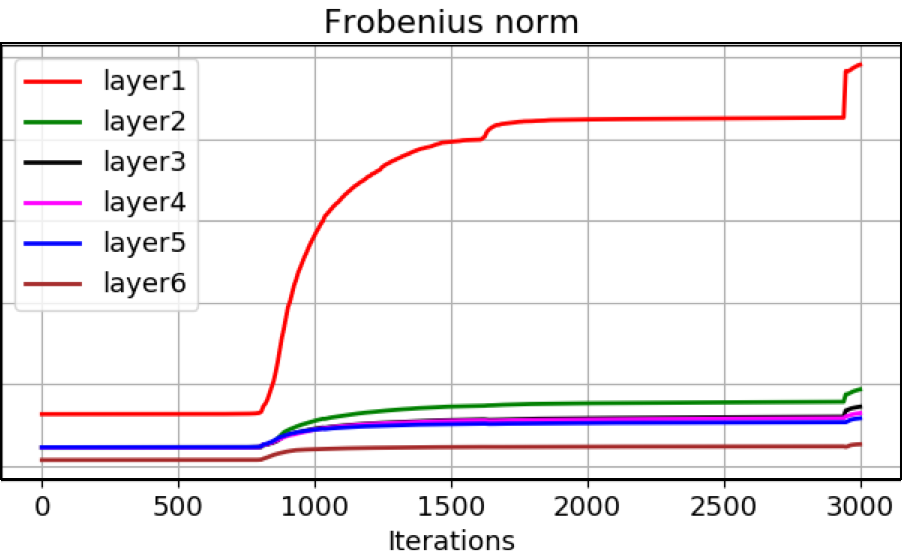}
\caption{Evolution of layer norms in a 6 layer RelU network learning the MNIST dataset (please view in colour); 1 denotes closest to the input. The rate of growth of the upper layers (4-6) is similar, specially towards the end of training} 
\label{fig:gnn}
\end{figure}

where $o_i = D_{L,i,t} W_L(t) \cdots D_{1,i,t} W_1(t) x_i$. In similar lines to the proof in theorem \ref{thm:sym}, right multiplying both sides of the $l$-th equation of (\ref{eq:gd_ode_relu}) by $W_l^T \bar{D}_{li}^T$ and left multiplying both sides of the $l+1$-th equation of (\ref{eq:gd_ode_relu}) by $W_{l+1}^T \bar{D}_{l+1,i}^T$, taking transpose followed by adding to both sides, and recalling that we are treating $\bar{D}_{li}$ as a constant inside a time window, we can show

\begin{align} \label{eq:theorem_relu}
& \frac{d }{dt} (W_{l+1}^T \bar{D}_{l+1,i}^T \bar{D}_{l+1,i} W_{l+1}) =  \frac{d}{dt} (\bar{D}_{li} W_{l} W_{l}^T \bar{D}_{li}^T), \nonumber \\
& t \in [ t - \triangle t, t + \triangle t]
%& \Rightarrow \frac{d }{dt} (|\bar{D}_{l+1,i} W_{l+1}|_2^2) =  \frac{d}{dt} (|\bar{D}_{li} W_{l}|_2^2)
\end{align}

Note that $\bar{W}_{li} = \bar{D}_{li} W_l$ is a modified weight matrix specific to sample $i$ with those rows of $W_l$ removed where there are zeros in the diagonal of $\bar{D}_{li}$. Also, note that, the original weight matrix $W_l$ can be written as

\begin{align} 
& W_l = \frac{1}{m} \sum_{i=1}^m \bar{W}_{li} = [\frac{1}{m} \sum_{i=1}^m \bar{D}_{li}] W_l \\
& \Rightarrow \frac{d}{dt} (W_l W_l^T) = \frac{1}{m^2} \sum_{i,j} \frac{d}{dt}(\bar{D}_{li} W_l W_l^T \bar{D}_{lj}^T) \\
&\Rightarrow \frac{d}{dt} (W_{l+1}^T W_{l+1}) - \frac{d}{dt} (W_l W_l^T) = \nonumber \\
&\frac{1}{m^2-m} \frac{d}{dt} \Big[ \sum_{i \neq j} \Big( \bar{W}_{l+1,i}^T \bar{W}_{l+1,j} - \bar{W}_{li} \bar{W}_{lj}^T \Big) \Big] \neq [0]
\label{eq:weight_decompose}
\end{align}

\noindent after cancelling the equal terms for the same samples using (\ref{eq:theorem_relu}).

%Similarly, 
%
%\begin{equation} 
%\frac{d}{dt} (W_{l+1}^T W_{l+1}) = \frac{1}{m^2} \sum_{i,j} \frac{d}{dt}(W_{l+1}^T \bar{D}_{l+1,i}^T \bar{D}_{l+1,j} W_{l+1})
%\label{eq:wt_decompose_2}
%\end{equation}

%Evaluating (\ref{eq:weight_decompose}) at layer $l+1$ and cancelling the equal terms using (\ref{eq:theorem_relu}),
%
%\begin{align} 
%&\frac{d}{dt} (W_{l+1}^T W_{l+1}) - \frac{d}{dt} (W_l W_l^T) = \nonumber \\
%&\frac{1}{m^2-m} \frac{d}{dt} \Big[ \sum_{i \neq j} \Big( \bar{W}_{l+1,i}^T \bar{W}_{l+1,j} - \bar{W}_{li} \bar{W}_{lj}^T \Big) \Big] \neq [0]
%%&\frac{1}{m^2} \Big[ \sum_{i \neq j} \frac{d}{dt}(\bar{W}_{l+1,i}^T \bar{W}_{l+1,j}) - \sum_{k \neq p} \frac{d}{dt}(\bar{W}_{lk} \bar{W}_{lp}^T) \Big] \neq 0
%\label{eq:wt_growth_unequal_relu}
%\end{align}

Thus, unlike LNNs in theorem \ref{thm:sym}, the rate of growth of the norm of adjacent layers in the non-linear case is not guaranteed to be equal, even within a small time window, because of the coupled effects of the RelU from different samples. Each sample tries to modify the partial norm of adjacent layers at a similar rate (refer (\ref{eq:theorem_relu})), but coupling from different samples breaks down the symmetry observed in LNNs.

However, it is known that a multilayer neural net progressively combine features into common groups as information passes through the network \cite{MahendranV14}. This means that 2 samples $i$ and $j$ such that $y_i = y_j$ (same labels) will after some layer $l$ and time $t$, have similar behaviour wrt the RelU for all layers $\acute{l} > l,\acute{t} > t$; these 2 samples will be similarly classified after the $l$-th layer after some training time ($\Rightarrow \bar{D}_{\acute{l},i} = \bar{D}_{\acute{l},j} \forall \acute{l} > l,\acute{t} > t$) and will cancel out in the RHS in (\ref{eq:theorem_relu}).

This implies that the sum in RHS in (\ref{eq:weight_decompose}), which is $O(m^2-m)$ at the beginning of training when the weights are random, will be some $O(m^2-\alpha_l(t))$ as training progresses, such that $\alpha_{l+1}(t) \geq \alpha_l(t)$ and $\alpha_l$ denotes the ever increasing number of input samples whose coupled contributions cancel out following (\ref{eq:theorem_relu}). This implies that growth rates of higher layers progressively match as training time progresses - this is captured in the following conjecture.

\begin{conj} \label{conj:gnn_growth}
When training a RelU network described in (\ref{eq:relu}), $\vert \frac{d}{dt} (W_{l+1}^T W_{l+1}) - \frac{d}{dt} (W_l W_l^T) \vert_F \rightarrow 0$ as $l \rightarrow L$, $t \rightarrow \infty$ for some $l > l^{*}$.
\end{conj}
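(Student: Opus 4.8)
The plan is to build the conjecture up from the exact per-sample symmetry (\ref{eq:theorem_relu}) and to isolate the single source of asymmetry, namely the cross-sample coupling exposed in (\ref{eq:weight_decompose}). First I would fix a linearisation window $[t-\triangle t,\, t+\triangle t]$ so that every mask $\bar D_{li}$ is constant, and record that the diagonal ($i=j$) part of the right-hand side of (\ref{eq:weight_decompose}) vanishes identically by (\ref{eq:theorem_relu}). The residual is therefore the normalised off-diagonal sum $\frac{1}{m^2-m}\frac{d}{dt}\sum_{i\neq j}\big(\bar W_{l+1,i}^T\bar W_{l+1,j}-\bar W_{li}\bar W_{lj}^T\big)$, and the whole problem reduces to showing that the Frobenius norm of this quantity tends to $0$ as $l\to L$ and $t\to\infty$.

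Next I would turn the feature-grouping observation of \cite{MahendranV14} into a usable lemma: there is a layer threshold $l^*$ and a training time beyond which, for every $l>l^*$, the samples partition into a small number of mask-equivalence classes $G_1,\dots,G_p$ on which $\bar D_{li}$ (and $\bar D_{l+1,i}$) is common, with the partition coarsening monotonically as $l\to L$. Granting this, any ordered pair $i\neq j$ lying in the same group $G_r$ shares a common mask, so the cross terms $\bar W_{l+1,i}^T\bar W_{l+1,j}$ and $\bar W_{li}\bar W_{lj}^T$ collapse to exactly the per-sample expressions $W_{l+1}^T\bar D_{l+1}^T\bar D_{l+1}W_{l+1}$ and $\bar D_l W_l W_l^T\bar D_l^T$ that (\ref{eq:theorem_relu}) already equates; hence their difference cancels in (\ref{eq:weight_decompose}) just as the diagonal terms did. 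The only survivors are the pairs straddling two distinct groups, numbering $m^2-\sum_r|G_r|^2 = O(m^2-\alpha_l(t))$, with $\alpha_l(t)$ increasing in $t$ and non-decreasing in $l$ by the coarsening.

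To finish I would combine two effects. As $t\to\infty$ the flow (\ref{eq:gd_ode_relu}) approaches a critical point, so $\tfrac{dW_j}{dt}\to 0$ for every $j$ and each surviving summand's time derivative is driven to zero; simultaneously, as $l\to L$ the coarsening pushes $\alpha_l(t)$ towards $m^2$, shrinking the count of surviving pairs. Using a uniform bound on the weight norms (available from the norm-equivalence of adjacent layers in Lemma \ref{lem:bounded_diff}, or simply from convergence of training) to bound each summand by a constant, the normalised residual is at most $O\!\big(1-\alpha_l(t)/m^2\big)$ times that constant, which tends to $0$ in the stated double limit.

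The hard part — and the reason this remains a conjecture rather than a theorem — is the feature-grouping step. Two difficulties stand out. First, same-group cancellation only removes the intra-group pairs, and for $C$ output classes the cross-group pairs still number $\Theta(m^2)$ at a fixed layer; genuinely closing the conjecture requires either a finer argument that the mask partition collapses essentially to a single group at the deepest layers, or a statistical cancellation estimate showing the cross-group sum is $o(m^2)$ in norm rather than merely bounded. Second, the identity $\bar D_{\acute l,i}=\bar D_{\acute l,j}$ for same-label samples is an empirical regularity about trained ReLU networks, not a consequence of the gradient flow itself; deriving it from the dynamics of (\ref{eq:gd_ode_relu}) — that gradient flow actually forces same-class activation patterns to coincide in deep layers within finite time — is the genuinely open ingredient. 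Absent such a result, the strongest rigorous assertion is the conditional one: \emph{if} the mask partition coarsens as described, \emph{then} the adjacent-layer growth-rate discrepancy vanishes in the $l\to L$, $t\to\infty$ limit.
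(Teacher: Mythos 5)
Your plan is essentially the paper's own reasoning: the statement is left as a \emph{conjecture} there (no proof is given), and the discussion preceding it follows exactly your route --- the per-sample cancellation via (\ref{eq:theorem_relu}), the reduction of the adjacent-layer discrepancy to the normalised off-diagonal sum in (\ref{eq:weight_decompose}), the appeal to feature grouping \cite{MahendranV14} to argue that masks coincide for same-label samples beyond some layer $l^{*}$ and training time, and the resulting $O(m^2-\alpha_l(t))$ count with $\alpha_l(t)$ non-decreasing in $l$ and $t$. Where you go beyond the paper is in diagnosing precisely why this heuristic cannot currently be promoted to a theorem: pairs of samples with different labels still contribute $\Theta(m^2)$ surviving terms at any fixed layer, and the mask-coincidence property is an empirical regularity of trained networks rather than something derived from the flow (\ref{eq:gd_ode_relu}); the paper itself only concedes that more analysis is needed, without isolating these two obstructions. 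One caution on your final step: invoking $dW_j/dt \to 0$ at convergence risks trivialising the claim, since under convergence both growth rates vanish and the difference tends to zero for \emph{every} layer; the substantive content of the conjecture is the dependence on $l$ (and the rate at which the discrepancy shrinks before convergence), which is exactly what your conditional coarsening lemma would have to supply.
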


\section{Experiments}\label{sec:expts}

In this section, we empirically validate the insights we gained in sections \ref{sec:lnn} and \ref{sec:gnn} by training LNNs on synthetically generated data and observe training performance of a RelU network on MNIST \cite{mnist}.

For the experiments with LNN, we generate 10-dimensional samples from a mixture of 3 Gaussians with 400 samples from each Gaussian. Each Gaussian has a randomly scaled and rotated covariance. We generate the 3-dimensional predictions (ground truth) by multiplying the sample with a randomly initialised $3 \times 10$ matrix. During training, we try to learn the linear transform by a 3 layer and a 5 layer network respectively. The dimension of the weights of the 3 layer network is $6 \times 10 $ for $W_1$ and $3 \times 6$ for $W_2$. The corresponding dimensions for the 5 layer network is $8 \times 10 $ for $W_1$, $6 \times 8 $ for $W_2$, $4 \times 6 $ for $W_3$ and $3 \times 4 $ for $W_4$. All matrices have been Glorot initialised.

Figure \ref{fig:lnn} (i) shows the growth of norms of $W_1$ (blue) and $W_2$ (orange) with the same rate in the 2 layer case, always keeping within the distance that they started from at initialisation. Reading and deriving from figure \ref{fig:lnn} (i), $\kappa_1 = 0.98$ and $\kappa_2 = 0.65$ for (\ref{eq:defn_evolve_U_u}). $|W_2 W_1|_F$ (green) approaches $|\Sigma_{yx}|_F$ (violet) at convergence, while $|W_1|_F |W_2|_F$ (red) is dominated by the upper bound from (\ref{eq:defn_evolve_U_u}) (brown).

Figure \ref{fig:lnn} (ii) shows the growth of norms of $W_1 - W_4$ for the 5 layer case, where the ground truth settings are similar to Figure \ref{fig:lnn} (i), thus showing that the layers indeed grow at the same pace. Figure \ref{fig:lnn} (iii) shows the utility of the upper bound from (\ref{eq:defn_evolve_U_u}). With identical settings of $\kappa_1$, $\kappa_2$, $W_l(t_0)$ and $|\Sigma_{yx}|_F$ in (\ref{eq:defn_evolve_U_u}) and by only varying the number of layers (from 2 to 4), we observe that smaller networks have a slower but smoother transition to convergence, whereas larger layers have periods of low learning followed by a sharp transition and explosion in growth. This suggests that smaller networks might be inherently stable at the cost of slower learning.

We also experiment to test our insights from section \ref{sec:gnn}. We train a 7 layer fully connected RelU network on the MNIST dataset \cite{mnist}, with the number of hidden units in layers 1 to 6 all set to 100. As can be seen from figure \ref{fig:gnn}, the norm of the layers have a similar rate of growth towards the end of training.

Note that the initial part of the layer growth till 1600 has different layers growing at different speed, but after that the upper layers, specially layers 4-6 seem to follow a similar growth rate. This seems to support conjecture \ref{conj:gnn_growth} for this particular example, but more analysis and experimental validation is needed to prove conjecture \ref{conj:gnn_growth} firmly.

\section{Discussion and Conclusion}\label{sec:concl}

%A theory for understanding the dynamics of evolution of deep neural nets is required to establish network design on fool-proof foundations. 

In this work, we analyse the behaviour of layer growth in general linear neural networks, where the coupling between adjacent layer weights leads to several interesting phenomenon that can also be observed in nonlinear nets.

In particular, we show that irrespective of the layer dimensions, reasonable initialisations make the layers in a LNN grow at approximately the same rate. Increasing layers have a compounding effect in the rate of layer growth. More layers lead to starkly separated growth phases in training; periods of low growth are followed by rapid growth to convergence. However, fewer layers might lead to a more stable but slower learning.

We also demonstrate a novel linearisation technique to apply insights from LNN to general nets; we show that individual samples force adjacent layers to grow at similar rates, but the nonlinearity breaks down overall symmetry in the growth of adjacent layers. We also hypothesise that upper layers should grow at similar rates in a GNN as training progresses.

Several interesting directions present themselves as a followup - replacing fully connected layers by CNNs, understanding the effect of batch processing on learning dynamics as well as developing a more well rounded theory for linearisation will be pursued for further development. 

%% The file named.bst is a bibliography style file for BibTeX 0.99c
\bibliographystyle{named}
\bibliography{ijcai_layerdynamics}

\begin{thebibliography}{}

\bibitem[\protect\citeauthoryear{{Advani} and {Saxe}}{2017}]{advani1}
M.~S. {Advani} and A.~M. {Saxe}.
\newblock {High-dimensional dynamics of generalization error in neural
  networks}.
\newblock {\em ArXiv e-prints}, October 2017.

\bibitem[\protect\citeauthoryear{Arora \bgroup \em et al.\egroup
  }{2018}]{arora1}
Sanjeev Arora, Nadav Cohen, and Elad Hazan.
\newblock On the optimization of deep networks: Implicit acceleration by
  overparameterization.
\newblock {\em CoRR}, abs/1802.06509, 2018.

\bibitem[\protect\citeauthoryear{Baldi and Hornik}{1989}]{hornik1}
P.~Baldi and K.~Hornik.
\newblock Neural networks and principal component analysis: Learning from
  examples without local minima.
\newblock {\em Neural Netw.}, 2(1):53--58, January 1989.

\bibitem[\protect\citeauthoryear{Chaudhari and Soatto}{2017}]{ode2}
Pratik Chaudhari and Stefano Soatto.
\newblock Stochastic gradient descent performs variational inference, converges
  to limit cycles for deep networks.
\newblock {\em CoRR}, abs/1710.11029, 2017.

\bibitem[\protect\citeauthoryear{Choromanska \bgroup \em et al.\egroup
  }{2014}]{Choromanska15a}
Anna Choromanska, Mikael Henaff, Micha{\"{e}}l Mathieu, G{\'{e}}rard~Ben Arous,
  and Yann LeCun.
\newblock The loss surface of multilayer networks.
\newblock {\em CoRR}, abs/1412.0233, 2014.

\bibitem[\protect\citeauthoryear{Fukumizu}{1998}]{fukumizu1}
Kenji Fukumizu.
\newblock Effect of batch learning in multilayer neural networks.
\newblock In {\em Proceedings of the Fifth International Conference on Neural
  Information Processing}, 1998.

\bibitem[\protect\citeauthoryear{Glorot and Bengio}{2010}]{glorot1}
Xavier Glorot and Yoshua Bengio.
\newblock Understanding the difficulty of training deep feedforward neural
  networks.
\newblock In Yee~Whye Teh and Mike Titterington, editors, {\em Proceedings of
  the Thirteenth International Conference on Artificial Intelligence and
  Statistics}, volume~9 of {\em Proceedings of Machine Learning Research},
  pages 249--256, Chia Laguna Resort, Sardinia, Italy, 13--15 May 2010. PMLR.

\bibitem[\protect\citeauthoryear{Goodfellow \bgroup \em et al.\egroup
  }{2016}]{Goodfellow-et-al-2016}
Ian Goodfellow, Yoshua Bengio, and Aaron Courville.
\newblock {\em Deep Learning}.
\newblock MIT Press, 2016.
\newblock \url{http://www.deeplearningbook.org}.

\bibitem[\protect\citeauthoryear{Greff \bgroup \em et al.\egroup }{2015}]{lstm}
Klaus Greff, Rupesh~Kumar Srivastava, Jan Koutn{\'{\i}}k, Bas~R. Steunebrink,
  and J{\"{u}}rgen Schmidhuber.
\newblock {LSTM:} {A} search space odyssey.
\newblock {\em CoRR}, abs/1503.04069, 2015.

\bibitem[\protect\citeauthoryear{Hardt and Ma}{2016}]{HardtM16}
Moritz Hardt and Tengyu Ma.
\newblock Identity matters in deep learning.
\newblock {\em CoRR}, abs/1611.04231, 2016.

\bibitem[\protect\citeauthoryear{He \bgroup \em et al.\egroup }{2015}]{resnet}
Kaiming He, Xiangyu Zhang, Shaoqing Ren, and Jian Sun.
\newblock Deep residual learning for image recognition.
\newblock {\em CoRR}, abs/1512.03385, 2015.

\bibitem[\protect\citeauthoryear{Kawaguchi}{2016}]{kawaguchi1}
Kenji Kawaguchi.
\newblock Deep learning without poor local minima.
\newblock In D.~D. Lee, M.~Sugiyama, U.~V. Luxburg, I.~Guyon, and R.~Garnett,
  editors, {\em Advances in Neural Information Processing Systems 29}, pages
  586--594. Curran Associates, Inc., 2016.

\bibitem[\protect\citeauthoryear{Krizhevsky \bgroup \em et al.\egroup
  }{2012}]{alexnet}
Alex Krizhevsky, Ilya Sutskever, and Geoffrey~E Hinton.
\newblock Imagenet classification with deep convolutional neural networks.
\newblock In F.~Pereira, C.~J.~C. Burges, L.~Bottou, and K.~Q. Weinberger,
  editors, {\em Advances in Neural Information Processing Systems 25}, pages
  1097--1105. Curran Associates, Inc., 2012.

\bibitem[\protect\citeauthoryear{{Lecun} \bgroup \em et al.\egroup
  }{1998}]{mnist}
Y.~{Lecun}, L.~{Bottou}, Y.~{Bengio}, and P.~{Haffner}.
\newblock Gradient-based learning applied to document recognition.
\newblock {\em Proceedings of the IEEE}, 86(11):2278--2324, Nov 1998.

\bibitem[\protect\citeauthoryear{Li \bgroup \em et al.\egroup }{2015}]{ode1}
Qianxiao Li, Cheng Tai, and Weinan E.
\newblock Dynamics of stochastic gradient algorithms.
\newblock {\em CoRR}, abs/1511.06251, 2015.

\bibitem[\protect\citeauthoryear{Mahendran and Vedaldi}{2014}]{MahendranV14}
Aravindh Mahendran and Andrea Vedaldi.
\newblock Understanding deep image representations by inverting them.
\newblock {\em CoRR}, abs/1412.0035, 2014.

\bibitem[\protect\citeauthoryear{Poggio and Liao}{2017}]{PoggioL17}
Tomaso~A. Poggio and Qianli Liao.
\newblock Theory {II:} landscape of the empirical risk in deep learning.
\newblock {\em CoRR}, abs/1703.09833, 2017.

\bibitem[\protect\citeauthoryear{Poggio \bgroup \em et al.\egroup
  }{2018a}]{Poggio_puzzle}
Tomaso~A. Poggio, Kenji Kawaguchi, Qianli Liao, Brando Miranda, Lorenzo
  Rosasco, Xavier Boix, Jack Hidary, and Hrushikesh Mhaskar.
\newblock Theory of deep learning {III:} explaining the non-overfitting puzzle.
\newblock {\em CoRR}, abs/1801.00173, 2018.

\bibitem[\protect\citeauthoryear{Poggio \bgroup \em et al.\egroup
  }{2018b}]{Poggio_puzzle_2}
Tomaso~A. Poggio, Qianli Liao, Brando Miranda, Andrzej Banburski, Xavier Boix,
  and Jack Hidary.
\newblock Theory iiib: Generalization in deep networks.
\newblock {\em CoRR}, abs/1806.11379, 2018.

\bibitem[\protect\citeauthoryear{Russakovsky \bgroup \em et al.\egroup
  }{2014}]{imagenet-1k}
Olga Russakovsky, Jia Deng, Hao Su, Jonathan Krause, Sanjeev Satheesh, Sean Ma,
  Zhiheng Huang, Andrej Karpathy, Aditya Khosla, Michael~S. Bernstein,
  Alexander~C. Berg, and Fei{-}Fei Li.
\newblock Imagenet large scale visual recognition challenge.
\newblock {\em CoRR}, abs/1409.0575, 2014.

\bibitem[\protect\citeauthoryear{Saxe \bgroup \em et al.\egroup }{2013}]{saxe1}
Andrew~M. Saxe, James~L. McClelland, and Surya Ganguli.
\newblock Exact solutions to the nonlinear dynamics of learning in deep linear
  neural networks.
\newblock {\em CoRR}, abs/1312.6120, 2013.

\bibitem[\protect\citeauthoryear{{Soudry} and {Carmon}}{2016}]{Soudry1}
Daniel {Soudry} and Yair {Carmon}.
\newblock {No bad local minima: Data independent training error guarantees for
  multilayer neural networks}.
\newblock {\em arXiv e-prints}, page arXiv:1605.08361, May 2016.

\bibitem[\protect\citeauthoryear{{Tachet des Combes} \bgroup \em et al.\egroup
  }{2018}]{Combes2018}
R.~{Tachet des Combes}, M.~{Pezeshki}, S.~{Shabanian}, A.~{Courville}, and
  Y.~{Bengio}.
\newblock {On the Learning Dynamics of Deep Neural Networks}.
\newblock {\em ArXiv e-prints}, September 2018.

\end{thebibliography}

\end{document}